\newtheorem{theorem}{Theorem}
\newtheorem{lemma}[theorem]{Lemma} 
\newtheorem{proposition}[theorem]{Proposition}
\newtheorem{remark}[theorem]{Remark}
\theoremstyle{definition}
\newcommand{\assref}[1]{\hyperlink{#1}{(#1)}}
\newcounter{dummy} 
\newcommand{\R}{\mathbb{R}}
\renewcommand{\d}{\text{ d}}
\DeclareMathOperator*{\argmin}{arg\,min}
\DeclareMathOperator*{\argmax}{arg\,max}
\newcommand{\tT}{\mathrm{T}}
\definecolor{dgreen}{RGB}{0,100,0}
\begin{document}
\renewcommand{\thefootnote}{\fnsymbol{footnote}}

\title{Provable Mixed-Noise Learning with Flow-Matching}

\author{Paul Hagemann\footnotemark[1]\footnotemark[4] \and Robert Gruhlke\footnotemark[2]
\and Bernhard Stankewitz\footnotemark[3]
\and Claudia Schillings\footnotemark[2]
		\and Gabriele Steidl\footnotemark[1]
 }
 \date{\today}
 \maketitle
        \footnotetext[1]{Institute of Mathematics,
	   Technische Universität Berlin,
	   Stra{\ss}e des 17.\ Juni 136, 
	   10623 Berlin, Germany,
	   {\ttfamily\{hagemann, steidl\}@math.tu-berlin.de},
      \url{http://tu.berlin/imageanalysis}	
 }
 \footnotetext[2]{
 Institute of Mathematics,
 Freie Universität Berlin, 
  Arnimallee 6, 14195 Berlin, Germany,
 {\ttfamily\{r.gruhlke, c.schillings\}@fu-berlin.de},
 \url{https://www.mi.fu-berlin.de/math/groups/naspde/index.html}}
+
\footnotetext[3]{U Potsdam}

\footnotetext[4]{Bundesanstalt für Materialforschung und Prüfung (BAM),
	   {paul.hagemann@bam.de}
 }

\begin{abstract} \
We study Bayesian inverse problems with mixed noise, modeled as a combination of additive and multiplicative Gaussian components. 
While traditional inference methods often assume fixed or known noise characteristics, real-world applications, particularly in physics and chemistry, frequently involve noise with unknown and heterogeneous structure. 
Motivated by recent advances in flow-based generative modeling, we propose a novel inference framework based on conditional flow matching embedded within an Expectation-Maximization (EM) algorithm to jointly estimate posterior samplers and noise parameters.
To enable high-dimensional inference and improve scalability, we use simulation-free ODE-based flow matching as the generative model in the E-step of the EM algorithm. 
We find a more Bayesian interpretation of EM convergence conditions, and verify some slightly weaker ones on the mixed noise model. 
Our numerical results illustrate the effectiveness of combining EM inference with flow matching for mixed-noise Bayesian inverse problems.
\end{abstract}

\section{Introduction}

Bayesian inverse problems aim to infer an unknown variable $X \in \mathbb{R}^d$ from indirect and noisy observations $Y \in \mathbb{R}^n$. The relationship is typically modeled as
\[
Y_\theta = F(X) + \eta_\theta,
\]
where $F \colon \mathbb{R}^d \to \mathbb{R}^n$ is a known possibly nonlinear forward operator and $\eta_\theta$ represents random noise depending on parameters $\theta$. In many real-world settings, the noise is not known, but depends on auxiliary noise parameters $\theta$, which are in our case two scalars $(a,b)$.
We focus on so-called mixed noise conditional on $X = x$, which is a sum of additive and multiplicative Gaussian noise, i.e., 
$$
\eta_{\theta} \sim  \mathcal{N}\big(0, a^2+b^2 \text{diag} (F(x)^2) \big).
$$ 

As many physical phenomena (or approximations thereof) do not usually exhibit Gaussian errors, this often yields a more flexible modelling approach. It is present in optics \cite{herrero2021uncertainties, HGB2015} and also biology \cite{rocke2001model}. Furthermore, the mixed noise is a natural setting to extend to beyond additive noise that is still tractable for mathematical analysis. 

In \cite{Hagemann_2024}, an iterative procedure was introduced to jointly update normalizing flows and noise parameters, formulated within the Expectation-Maximization (EM) framework  consisting of an so called \textit{$E$-step} (expectation step) and a \textit{$M$-step} (maximization step) \cite{DLR1977}. This approach can be interpreted as a conditional extension of the DeepGEM algorithm \cite{gao2021deepgem}. However, the only convergence guarantee provided was the non-decreasing behavior of the marginal log-likelihood across iterations. Moreover, the method relies on discrete normalizing flows \cite{ruthotto2021introduction}, which have recently been surpassed in performance and scalability by ODE-based normalizing flows \cite{chen2018}, particularly those trained in a simulation-free manner via flow matching \cite{lipman2023flow, liu2023flow}. Flow matching has shown great success, in particular in image generation and regularization \cite{MGHS2025}, and is a variant of score-based diffusion \cite{song2021scorebased} which does not rely on a standard Gaussian latent.
This enables applications such as DNA sequence design, see~\cite{stark2024dirichlet}. Further, we can also make the latent distribution dependent on the data distribution, which yields optimal transport inspired flow matching approaches \cite{CHSW2024,tong2024improving}.

In this work, we build on and extend the line of research in \cite{Hagemann_2024}. Specifically, we
\begin{itemize}
    \item formulate an \emph{EM algorithm} where the E-step is realized via \emph{flow matching}, a modern generative modeling and sampling  technique based on neural ODEs;
    \item replace discrete flows with \emph{simulation-free, continuous-time flows} \cite{lipman2023flow} to enhance scalability and enable high-dimensional inference;
    \item establish \emph{consistency guarantees}: in the population limit of infinitely many measurements generated under a true noise parameter $\theta^*$, we investigate under which criteria the EM procedure converges, and give plausibility arguments for the mixed noise model. 
\end{itemize}

\subsection*{Notation}
We denote by $X \in \mathbb{R}^d$ the latent random variable, by $Y \in \mathbb{R}^n$ the observation and $\Theta\subset \mathbb R^d$ a nonempty, compact and convex parameter space. 
We denote by \( \| \cdot \| \) the Euclidean norm on \( \mathbb{R}^d \). For a function $\mathcal{Q}\colon \Theta\times\Theta\to \mathbb{R}$, we denote by $\nabla_1$ the gradient with respect to the first input.

Assuming existence (cp. \cite{stuart_2010}), we denote by $p_{Y_\theta}$ the density of the marginal likelihood of $Y$ given $\theta$, by $p_{X|Y_\theta=y}$ the corresponding posterior density and by $p_{X,Y_\theta}$ the joint density. The corresponding distributions are denoted using capitalization, e.g. $P_{X|Y_\theta = y}$ for the posterior distribution. We denote by \(\mathrm{pdf}(\mathbb{R}^d)\) the set of all Lebesgue densities on \(\mathbb{R}^d\), i.e.,
\[
\mathrm{pdf}(\mathbb{R}^d) := \left\{ q \colon \mathbb{R}^d \to [0,\infty) \;\middle|\; \int_{\mathbb{R}^d} q(x) \, \mathrm{d}x = 1 \right\},
\]
where the integral is taken with respect to the Lebesgue measure. For two probability measures $\mu$ and $\nu$ on a measurable space $(\Omega, \mathcal{B})$, the {Kullback--Leibler (KL) divergence} is defined by
\[
\mathrm{KL}(\mu\,\|\,\nu) \coloneqq \int_{\Omega} \log\left(\frac{\mathrm{d}\mu}{\mathrm{d}\nu}(x)\right) \, \mathrm{d}\mu(x),
\]
whenever $\mu$ is absolutely continuous with respect to $\nu$ and the integral is finite and set to $+\infty$ else. For measures $\mu$ and $\nu$ on $\mathbb{R}^d$ with Lebesque densities $f,g\in \mathrm{pdf}(\mathbb{R}^d)$ we may use abuse of notation and define $$\mathrm{KL}(f\,\|\,g) := \mathrm{KL}(\mu\,\|\,\nu).$$ 

For probability measures with finite first moments, the {Wasserstein-1 distance} is defined as
\[
\mathcal{W}_1(\mu, \nu) \coloneqq \inf_{\pi \in \Pi(\mu, \nu)} \int_{\Omega \times \Omega} \|x - y\| \, \mathrm{d}\pi(x,y),
\]
where $\Pi(\mu, \nu)$ denotes the set of all distributions on $\Omega\times\Omega$
 with marginals $\mu$ and $\nu$.
 \subsection*{Project Context}
The project can be seen in the context of the PhD project of the first author, supervised by the last author. In particular, it falls into the category of how one can solve inverse problems using conditional generative models. The main theme is that one can learn joint maps, i.e., families of generative models parameterized by the observation of the inverse problem. This is a very common theme in machine learning and for instance done in \cite{batzolis2021conditionalimagegenerationscorebased, ALKRK2019, Hagemann_2024}. This paper, in its theory also builds upon the classical Bayesian stability theory \cite{latz, sprungk2020local, stuart_2010} which was subsequently used to obtain pointwise estimates for conditional generators \cite{altekruger2023conditional} within this project. Therefore this paper revolves around the themes of the PhD project.

\subsection*{Outline}The manuscript at hand is organized as follows. We start giving the relevant background information on the general EM algorithm and flow matching in \Cref{sec:EM_Algo} and \Cref{sec:flowmatching} respectively. Next, we recall the convergence analysis of the so-called population EM operator in \Cref{sec:theory} and extend the theory in the follow up subsections  \Cref{sec:gradient_descent} and  \Cref{Sec:stepsize}. Then, we will verify the new framework for a class of mixed noise models in \Cref{sec:verification}. Finally, we illustrate the proposed method of mixed-noise learning with flow-matching in the numerical section for MNIST denoising in \Cref{sec:MNIST} and for a reconstruction task with underlying partial differential equation (PDE) in \Cref{sec:PDE}.

\section{Background}
Our goal is to recover the true noise parameters from observed data in a mathematically precise framework. To this end, we consider $N$ independent measurements $y_1, \dots, y_N$ drawn from a data-generating process corresponding to some unknown but fixed noise parameter $\theta^* \in \Theta$. Under standard regularity conditions, the maximum likelihood estimator is consistent, and in the limit of infinitely many observations, we may recover $\theta^*$ by solving
\begin{align}
\argmax_{\theta \in \Theta} \sum_{i=1}^N \log p_{Y_\theta}(y_i),
\end{align}
where $p_{Y_\theta}$ denotes the marginal likelihood of $Y$ under noise parameter $\theta$.

In practice, however, the marginal density $p_{Y_\theta}(y)$ is typically intractable, as it involves a high-dimensional integral of the form
\[
p_{Y_\theta}(y) = \int p_{Y_\theta|X = x}(y) \, p_X(x) \, \mathrm{d}x.
\]
A standard approach in variational inference and machine learning \cite{kingma2013, gao2021deepgem, Hagemann_2024} is to introduce a tractable surrogate by maximizing a lower bound on the log-likelihood, typically the so-called evidence lower bound (ELBO). This idea naturally leads to the EM framework, which alternates between estimating the latent posterior and optimizing the noise parameters.

\subsection{EM Algorithm}
\label{sec:EM_Algo}
Following the approach in \cite{Hagemann_2024}, we begin by observing that the marginal log-likelihood can be bounded below via Jensen's inequality. Let \( q\in \mathrm{pdf}(\mathbb{R}^d) \). Assuming \( q(x) > 0 \) wherever \( p_{X,Y_\theta}(x,y) > 0 \), we obtain
\begin{align*}
\log(p_{Y_\theta}(y)) 
&= \log\left( \int_{\mathbb{R}^d} p_{X,Y_\theta}(x,y) \, \mathrm{d}x \right) \\
&= \log\left( \int_{\mathbb{R}^d} \frac{p_{X,Y_\theta}(x,y)}{q(x)} q(x) \, \mathrm{d}x \right) \\
&\geq \int_{\mathbb{R}^d} \log\left( \frac{p_{X,Y_\theta}(x,y)}{q(x)} \right) q(x) \, \mathrm{d}x 
\;\eqqcolon\; \mathcal{F}(q, \theta \mid y),
\end{align*}
where \( \mathcal{F}(q, \theta \mid y) \) is known as the \emph{evidence lower bound (ELBO)}. This variational formulation motivates an iterative EM-type algorithm that alternates between approximating the posterior and updating the noise parameters.

More precisely, given current parameter estimate $\theta^{(k)}$, each iteration consists of the following steps:

\begin{align}
\text{E-step:} &\qquad q_i^{(k+1)} = \argmin_{q \in \mathrm{pdf}(\mathbb{R}^d)} \mathrm{KL}(q \,\|\, p_{X \mid Y_{\theta^{(k)}} = \,y_i}), \quad i = 1,\dots,N, \label{eq_estep} \\
\text{M-step:} &\qquad \theta^{(k+1)} = \argmax_{\theta \in \Theta} \sum_{i=1}^N \mathcal{F}(q_i^{(k+1)}, \theta \mid y_i), \label{eq_mstep}
\end{align}
where we have used the identity
\[
\log p_{Y_\theta}(y) = \mathrm{KL}(q \,\|\, p_{X \mid Y_\theta = y}) + \mathcal{F}(q, \theta \mid y),
\]
from \cite{bishop} which justifies the E-step as minimizing the KL divergence to the true posterior and the M-step as maximizing a lower bound on the marginal log-likelihood.

The concrete realization of this EM framework depends on how the posterior approximations $q_i$ and the M-step optimization are performed. In \cite{Hagemann_2024}, the M-step was addressed via a nested EM loop. Here, we instead use gradient-based optimization with automatic differentiation (autograd), which proves to be effective in practice, as demonstrated in Section~\ref{sec:numerics}.

For the E-step, classical approaches include MCMC-based posterior sampling. However, such methods are computationally demanding, especially when repeated for each observation. Alternatively, we adopt a conditional generative modeling perspective, where a single shared model is trained to approximate the posterior distribution for all observations. This amortized setup enables efficient inference across the dataset and allows parameter sharing across different $y_i$. While \cite{Hagemann_2024} relied on conditional normalizing flows in the spirit of \cite{gao2021deepgem}, we employ {conditional flow matching} as introduced in \cite{albergo2023building, lipman2023flow, liu2022rectified}, which provides improved stability and scalability in high-dimensional settings (which can be seen by improved image generation results as opposed to discrete normalizing flows).

\subsection{Flow Matching}
\label{sec:flowmatching}

We now detail the concrete realization of the E-step in our EM algorithm using conditional flow matching. 
The E-step aims to approximate the posterior distributions
\[
q_i^{(k+1)} = \argmin_{q \in \mathrm{pdf}(\mathbb{R}^d)} \mathrm{KL}(q, p_{X \mid Y_{\theta^{(k)}} = \, y_i}),
\]
for each observation \( y_i \), given current parameter estimates \( \theta^{(k)} \). In \cite{Hagemann_2024}, these $q_i$ were approximated using conditional normalizing flows, extending the DeepGEM framework \cite{gao2021deepgem} to the case of unknown noise parameters.

In this work, we instead propose to use \emph{conditional flow matching} \cite{albergo2023building,liu2022rectified, liu2023flow, lipman2023flow}, which constructs posterior samples by learning a velocity field that generates approximate transport maps from a reference distribution to the target posterior. Note that the term "conditional" here refers to the Bayesian conditioning on observations \( y \), which differs from the conditioning used in \cite{lipman2023flow}.

The central idea is to learn a measurement-conditional velocity field \( v_t(x, y) \) that transports a simple reference distribution \( P_Z \) to the posterior distribution \( P_{X \mid Y = y} \), for all values of \( y \) in the data distribution. To this end, we define a linear interpolation path in the joint space of observations and latent variables:
\[
(X_t, Y_t) := t \cdot (X, Y) + (1 - t) \cdot (Z, Y), \quad t \in [0,1],
\]
which starts at \( (Z, Y) \) at time \( t = 0 \) and ends at \( (X, Y) \) at time \( t = 1 \). This construction keeps the observation \( Y \) fixed along the path, i.e.,
\[
\partial_t (X_t, Y_t) = ( X - Z,0) \approx ( v_t(X_t, Y),0),
\]
so the velocity field only needs to act in the latent \( x \)-direction.

We denote by $P_{X_t,Y_t}=P_{X_t,Y}$ the distribution of $(X_t,Y_t)=(X_t,Y)$. 
Although this formulation above is formal, it can be made rigorous using continuity equations and transport maps (see, e.g., \cite{CHSW2024, SW2025}). The velocity field \( v_t(x, y) \) is parameterized by a neural network with parameters \( w \) and trained to minimize the loss
\[
L(w) = \mathbb{E}_{(x_t, y, t) \sim P_{X_t, Y} \otimes \,\mathcal{U}[0,1]} \left[ \left\| v_t(x_t, y) - (x_1 - x_0) \right\|^2 \right],
\]
which penalizes deviations from the idealized interpolation velocity. This objective also controls distributional discrepancies between the true posterior and the learned flow-based approximation, as shown in \cite{benton2024error}.

Once the velocity field is trained, samples from the approximate posterior \( p_{X \mid Y = y} \) are obtained by solving the associated neural ODE with fixed observation \( y \):
\[
\partial_t X_t = v_t(X_t, y), \quad X_0 = Z,
\]
where \( Z \sim P_Z \) is a latent sample from the base distribution.

\section{Convergence Analysis of the Population EM Operator}
\label{sec:theory}

We start by recalling the convergence results for the population EM algorithm from \cite{em_conv}. 
For $\theta\in\Theta$ consider a parametric family of distributions $P_{Y_{\theta}}$ with density $p_{Y_{\theta}}$.
We assume that the observed data are generated according to a distribution \( P_{Y_{\theta^*}} \) with density $ p_{Y_{\theta^*}} $
corresponding to a true but unknown parameter \( \theta^* \in \Theta \).
More precisely, we assume that \( \theta^* \) is the  unique maximizer of  the \emph{population log-likelihood}
\[
\theta^* = \argmax_{\theta \in \Theta} \, \mathbb{E}_{y \sim p_{Y_{\theta^*}}}[\log p_{Y_\theta}(y)].
\]
Define the population-level complete-data objective, the so-called \( Q \)-function, by
\[
\mathcal{Q}(\theta, \hat{\theta}) := \mathbb{E}_{y \sim p_{Y_{\theta^*}}} \, \mathbb{E}_{x \sim p_{X \mid Y_{\hat{\theta}} = y}} \left[ \log p_{X, Y_\theta}(x, y) \right].
\]
This function evaluates the expected complete-data log-likelihood under the posterior associated with \( \hat{\theta} \), and plays a central role in the EM framework.

It is known  \cite{em_conv} that the population maximum \( \theta^* \) satisfies the \emph{self-consistency condition}
\[
\theta^* = \argmax_{\theta \in \Theta} \, \mathcal{Q}(\theta, \theta^*).
\]

We define the population EM operator \( M \colon \Theta \to \Theta \) as
\[
M(\hat{\theta}) := \argmax_{\theta \in \Theta} \, \mathcal{Q}(\theta, \hat{\theta}),
\]
assuming uniqueness of the maximizer for $\hat{\theta}\in\Theta$,
so that the population EM iteration becomes
\[
\theta^{(k+1)} = M(\theta^{(k)}).
\]
A gradient-based variant is given by
\begin{equation}
\label{eq:gradient_descent}
\theta^{(k+1)} \coloneqq \theta^{(k)} + \tau \nabla_1 \mathcal{Q}(\theta, \theta^{(k)}) \big|_{\theta = \theta^{(k)}}, \quad k \in \mathbb{N}_0,
\end{equation}
where \( \tau > 0 \) is a step size. In the following, we assume that $\theta^*$ 
is in the interior of $\Theta$ and that $\varepsilon >0$
is chosen so small that  \( B_\varepsilon(\theta^*) := \{ \theta \in \Theta \mid \|\theta - \theta^*\| < \varepsilon \} \) is contained in $\Theta$.

\paragraph{Local Strong Concavity.}
We say that \( \mathcal{Q}(\cdot, \theta^*) \) is \emph{locally \( \lambda \)-strongly concave} for some \( \lambda > 0 \), if there exists \( \varepsilon > 0 \) such that
\[
\mathcal{Q}(\theta_1, \theta^*) - \mathcal{Q}(\theta_2, \theta^*) - \langle \nabla_1 \mathcal{Q}(\theta_2, \theta^*), \theta_1 - \theta_2 \rangle \leq -\frac{\lambda}{2} \|\theta_1 - \theta_2\|^2,
\]
for all \( \theta_1, \theta_2 \in B_\varepsilon(\theta^*) \). If \( \mathcal{Q} \) is twice continuously differentiable, this is equivalent to
\[
\nabla^2_\theta \mathcal{Q}(\theta, \theta^*) \leq -\lambda I_d, \quad \forall \theta \in B_\varepsilon(\theta^*).
\]

\paragraph{First-Order Stability (FOS).}
We say that 
\( \mathcal{Q}(\cdot, \theta^* ) \) 
satisfies the \emph{first-order stability condition} \( \mathrm{FOS}(\gamma) \) on \( B_\varepsilon(\theta^*) \), for some \( \gamma \geq 0 \), if
\[
\left\| \nabla_1 \mathcal{Q}(M(\theta), \theta^*) - \nabla_1 \mathcal{Q}(M(\theta), \theta) \right\| \leq \gamma \|\theta - \theta^*\|, \quad \forall \theta \in B_\varepsilon(\theta^*).
\]
Note that in general \( \nabla_1 \mathcal{Q}(M(\theta), \theta) \neq 0 \) due to the constrained maximization over \( \Theta \).

\begin{theorem}[Convergence of Population EM \cite{em_conv}]
\label{thm:main}
Let \( \varepsilon > 0 \), and suppose that \( \mathcal{Q}(\cdot, \theta^*) \) is locally \( \lambda \)-strongly concave, and that the \( \mathrm{FOS}(\gamma) \) condition holds on \( B_\varepsilon(\theta^*)\) with \( 0 \leq \gamma < \lambda \). Then the EM operator \( M \) is a contraction on \( B_\varepsilon(\theta^*) \), i.e.,
\[
\|M(\theta) - \theta^*\| \leq \frac{\gamma}{\lambda} \|\theta - \theta^*\|, \quad \forall \theta \in B_\varepsilon(\theta^*).
\]
In particular, for any \( \theta^{(0)} \in B_\varepsilon(\theta^*) \), the sequence satisfies linear convergence
\[
\|\theta^{(r)} - \theta^*\| \leq \left( \frac{\gamma}{\lambda} \right)^r \|\theta^{(0)} - \theta^*\|.
\]
\end{theorem}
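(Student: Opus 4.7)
The plan is to establish the one-step contraction $\|M(\theta)-\theta^*\| \le (\gamma/\lambda)\|\theta-\theta^*\|$ on $B_\varepsilon(\theta^*)$; the linear-rate estimate for the iterates then follows immediately by induction, provided each iterate remains in the ball. The three ingredients I would combine are: the self-consistency identity $\theta^* = \argmax_{\theta} \mathcal{Q}(\theta,\theta^*)$, the local $\lambda$-strong concavity of $\mathcal{Q}(\cdot,\theta^*)$, and the $\mathrm{FOS}(\gamma)$ bound.

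First I would exploit self-consistency: since $\theta^*$ lies in the interior of $\Theta$ and is an unconstrained maximizer of $\mathcal{Q}(\cdot,\theta^*)$, one has $\nabla_1\mathcal{Q}(\theta^*,\theta^*)=0$. Strong concavity of $\mathcal{Q}(\cdot,\theta^*)$, applied in its gradient-monotonicity form to the pair $(M(\theta),\theta^*)$, then yields
\[
\langle \nabla_1\mathcal{Q}(M(\theta),\theta^*) - \nabla_1\mathcal{Q}(\theta^*,\theta^*),\; M(\theta)-\theta^*\rangle \;\le\; -\lambda\|M(\theta)-\theta^*\|^2,
\]
which, by the vanishing gradient at $\theta^*$, simplifies to $\langle \nabla_1\mathcal{Q}(M(\theta),\theta^*),\, M(\theta)-\theta^*\rangle \le -\lambda\|M(\theta)-\theta^*\|^2$.

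Next I would add and subtract $\nabla_1\mathcal{Q}(M(\theta),\theta)$ inside that inner product. The resulting cross-term $\langle \nabla_1\mathcal{Q}(M(\theta),\theta),\, M(\theta)-\theta^*\rangle$ is non-negative: because $\Theta$ is convex and $M(\theta)$ maximizes $\mathcal{Q}(\cdot,\theta)$ over $\Theta$ while $\theta^*\in\Theta$, the constrained first-order condition gives $\langle \nabla_1\mathcal{Q}(M(\theta),\theta),\,\theta^*-M(\theta)\rangle \le 0$. The remaining term $\langle \nabla_1\mathcal{Q}(M(\theta),\theta^*) - \nabla_1\mathcal{Q}(M(\theta),\theta),\, M(\theta)-\theta^*\rangle$ is controlled by Cauchy--Schwarz together with $\mathrm{FOS}(\gamma)$. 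Putting the three bounds together yields
\[
\lambda\|M(\theta)-\theta^*\|^2 \;\le\; \gamma\,\|\theta-\theta^*\|\cdot\|M(\theta)-\theta^*\|,
\]
and dividing by $\|M(\theta)-\theta^*\|$ (with the case $M(\theta)=\theta^*$ being trivial) delivers the claimed contraction rate $\gamma/\lambda$. Iterating the contraction produces the geometric rate $(\gamma/\lambda)^r$.

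The point I expect to need the most care is the invariance of $B_\varepsilon(\theta^*)$ under $M$: the strong-concavity and $\mathrm{FOS}$ hypotheses are only stated on $B_\varepsilon(\theta^*)$, yet the argument applies strong concavity at the pair $(M(\theta),\theta^*)$ and so implicitly requires $M(\theta)\in B_\varepsilon(\theta^*)$. This is in fact automatic once the contraction is established, since $\gamma/\lambda<1$ forces $\|M(\theta)-\theta^*\|<\|\theta-\theta^*\|<\varepsilon$, but I would remove the apparent circularity either by a continuity/bootstrapping argument (continuity of $M$ around $\theta^*=M(\theta^*)$ yields invariance on a possibly smaller sub-ball, which then extends to all of $B_\varepsilon(\theta^*)$ after the contraction is in hand) or by integrating the strong-concavity inequality along the segment from $\theta^*$ to $M(\theta)$, which only requires strong concavity on that segment. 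Either route closes the gap with minimal additional work.
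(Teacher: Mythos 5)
The paper does not prove this theorem itself; it is recalled verbatim from the cited reference \cite{em_conv}, so there is no in-paper proof to compare against. Your argument is, however, exactly the standard proof of that result: self-consistency gives $\nabla_1\mathcal{Q}(\theta^*,\theta^*)=0$, gradient monotonicity from $\lambda$-strong concavity at the pair $(M(\theta),\theta^*)$, the variational inequality $\langle \nabla_1\mathcal{Q}(M(\theta),\theta),\,\theta^*-M(\theta)\rangle\le 0$ from constrained optimality of $M(\theta)$, and Cauchy--Schwarz combined with $\mathrm{FOS}(\gamma)$ to close the bound. All steps are correct. You are also right to flag the only delicate point: since strong concavity is assumed only on $B_\varepsilon(\theta^*)$, applying it at $(M(\theta),\theta^*)$ presupposes $M(\theta)\in B_\varepsilon(\theta^*)$, which the theorem statement here leaves implicit; either of your proposed repairs (a bootstrapping argument from $M(\theta^*)=\theta^*$ and continuity, or restricting the concavity inequality to the segment) is the standard way to close that gap, and your handling of the trivial case $M(\theta)=\theta^*$ before dividing is appropriate.
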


\paragraph{Gradient Descent Variant.}
 The previous result assumes access to the exact maximizer \( M(\hat{\theta}) \). To relax this, a gradient descent version was proposed in \cite{em_conv}. 
 Define
\begin{itemize}
\item \( \mathcal{Q} \) is \emph{locally \( \gamma \)-gradient smooth} on $B_\varepsilon(\theta^*)$ if 
\[
\|\nabla_1 \mathcal{Q}({\theta}, \theta^*) - \nabla_1 \mathcal{Q}({\theta}, {\theta})\| \leq \gamma \|{\theta} - \theta^*\| \quad \forall {\theta} \in B_\varepsilon(\theta^*),
\]
\item
\( \mathcal{Q} \) is \emph{locally \( \mu \)-smooth} if
\[
\mathcal{Q}(\theta_1, \theta^*) - \mathcal{Q}(\theta_2, \theta^*) - \langle \nabla_1 \mathcal{Q}(\theta_2, \theta^*), \theta_1 - \theta_2 \rangle \geq -\frac{\mu}{2} \|\theta_1 - \theta_2\|^2
\quad 
\forall {\theta_1}, \theta_2 \in B_\varepsilon(\theta^*).
\]
\end{itemize}
Then we have the following theorem.

\begin{theorem}[Convergence of Gradient Descent EM \cite{em_conv}]
\label{thm:main_2}
Assume that \( \mathcal{Q} \) is locally \( \lambda \)-strongly concave, \( \mu \)-smooth, and \( \gamma \)-gradient smooth on \( B_\varepsilon(\theta^*)\) with \( 0 \leq \gamma < \lambda \leq \mu \). Let \( \theta^{(0)} \in B_\varepsilon(\theta^*) \), and consider the updates \eqref{eq:gradient_descent} with step size \( \tau = \frac{2}{\mu + \lambda} \). Then the iterates satisfy
\[
\|\theta^{(k)} - \theta^*\| \leq \left(1 - \frac{2(\lambda - \gamma)}{\mu + \lambda} \right)^k \|\theta^{(0)} - \theta^*\|.
\]
\end{theorem}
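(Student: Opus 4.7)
The plan is to reduce the analysis to a classical contraction estimate for gradient ascent on a strongly concave, smooth function, and to treat the discrepancy between $\mathcal{Q}(\cdot, \theta^{(k)})$ and $\mathcal{Q}(\cdot, \theta^*)$ as a perturbation controlled by the $\gamma$-gradient-smoothness assumption. Concretely, I would introduce the \emph{oracle update} $\tilde{\theta}^{(k+1)} := \theta^{(k)} + \tau \nabla_1 \mathcal{Q}(\theta^{(k)}, \theta^*)$, i.e., the gradient step one would take if the true parameter $\theta^*$ were used inside the posterior expectation defining $\mathcal{Q}$. Since $\theta^*$ is the unique interior maximizer of $\mathcal{Q}(\cdot, \theta^*)$, we have $\nabla_1 \mathcal{Q}(\theta^*, \theta^*) = 0$, so it is natural to use the decomposition
\[
\theta^{(k+1)} - \theta^* = \bigl(\tilde{\theta}^{(k+1)} - \theta^*\bigr) + \tau\bigl(\nabla_1 \mathcal{Q}(\theta^{(k)}, \theta^{(k)}) - \nabla_1 \mathcal{Q}(\theta^{(k)}, \theta^*)\bigr)
\]
and bound the two pieces separately.

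For the oracle part I would invoke the standard contraction lemma for gradient ascent: if $f$ is $\lambda$-strongly concave and $\mu$-smooth on $B_\varepsilon(\theta^*)$ with $\nabla f(\theta^*) = 0$, then for $\tau = 2/(\mu+\lambda)$ and any $\theta \in B_\varepsilon(\theta^*)$,
\[
\|\theta + \tau\,\nabla f(\theta) - \theta^*\| \leq \tfrac{\mu - \lambda}{\mu + \lambda}\,\|\theta - \theta^*\|.
\]
Applied with $f = \mathcal{Q}(\cdot, \theta^*)$, this yields $\|\tilde{\theta}^{(k+1)} - \theta^*\| \leq \tfrac{\mu - \lambda}{\mu + \lambda}\,\|\theta^{(k)} - \theta^*\|$. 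For the error part, the $\gamma$-gradient-smoothness gives directly
\[
\|\nabla_1 \mathcal{Q}(\theta^{(k)}, \theta^{(k)}) - \nabla_1 \mathcal{Q}(\theta^{(k)}, \theta^*)\| \leq \gamma\,\|\theta^{(k)} - \theta^*\|.
\]
Combining via the triangle inequality with $\tau = 2/(\mu+\lambda)$ yields
\[
\|\theta^{(k+1)} - \theta^*\| \leq \Bigl(\tfrac{\mu - \lambda}{\mu + \lambda} + \tfrac{2\gamma}{\mu + \lambda}\Bigr)\|\theta^{(k)} - \theta^*\| = \Bigl(1 - \tfrac{2(\lambda - \gamma)}{\mu + \lambda}\Bigr)\|\theta^{(k)} - \theta^*\|.
\]
Because $\gamma < \lambda$, the contraction factor is strictly less than one, so starting from $\theta^{(0)} \in B_\varepsilon(\theta^*)$ an induction guarantees $\theta^{(k)} \in B_\varepsilon(\theta^*)$ for all $k$, and iterating the bound gives the stated geometric rate.

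The main obstacle is establishing the contraction lemma for gradient ascent in the \emph{local} setting. The textbook derivation (e.g.\ Nesterov) proceeds via the co-coercivity inequality
\[
\langle \nabla f(\theta) - \nabla f(\theta^*), \theta - \theta^* \rangle \geq \tfrac{\lambda\mu}{\mu+\lambda}\,\|\theta - \theta^*\|^2 + \tfrac{1}{\mu+\lambda}\,\|\nabla f(\theta) - \nabla f(\theta^*)\|^2,
\]
which is usually proved using \emph{global} $\lambda$-strong convexity and $\mu$-smoothness. Here both bounds on the Bregman-type remainder are only assumed on $B_\varepsilon(\theta^*)$, so the classical argument must be restricted to pairs of points inside the ball, and one has to verify that the oracle iterate $\theta + \tau \nabla f(\theta)$ does not exit $B_\varepsilon(\theta^*)$ before applying the contraction estimate. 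This is ensured a posteriori by the estimate itself: the first valid step is strictly closer to $\theta^*$, enabling the inductive argument that keeps the entire trajectory in the ball. Once this careful localization is carried out, the remainder of the proof is routine algebra.
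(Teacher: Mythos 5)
Your proposal is correct and follows essentially the same route as the paper: the paper's own step-size analysis in the subsection ``Step Size Choice Revisited'' uses exactly your decomposition of the EM gradient update into the oracle gradient-ascent step on $q(\theta)=\mathcal{Q}(\theta,\theta^*)$ (contracting at rate $\sqrt{1-\tau\cdot 2\mu\lambda/(\mu+\lambda)}$, which equals $(\mu-\lambda)/(\mu+\lambda)$ for $\tau=2/(\mu+\lambda)$) plus a perturbation bounded by $\tau\gamma\|\theta^{(k)}-\theta^*\|$ via $\gamma$-gradient smoothness. Your localization remark and the induction keeping the iterates in $B_\varepsilon(\theta^*)$ are the right way to handle the only subtlety.
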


This result does no longer require the \( \mathrm{FOS}(\gamma) \) condition and relies instead on smoothness properties of the \( \mathcal{Q} \)-function.

\subsection{Sufficient Assumptions}
\label{Sec:sufficient_Ass}

In this subsection, we are interested in deriving sufficient conditions for two scenarios. First we focus on conditions that yield local strong concavity and first order stability for the case of population EM iteration. However, as it will turn out, the bound in the Wasserstein case will be too rough to give convergence, therefore we retreat to showing the FOS condition with high probability.

To this end, we formulate five assumptions:
\begin{itemize}[leftmargin=3.5em, labelsep=0.5em, align=left]
 \item[\textbf{(A1)}] \phantomsection\label{ass:A1}(\textit{Smoothness and Integrability of Log-Likelihood}) 
The map  $\theta \mapsto \log p_{X,Y_\theta}$is two times continuously differentiable in $B_\varepsilon(\theta^*)$ for all  $(x,y) \in  \R^{d \times n}$
and some $\varepsilon >0$ and there exist functions \( G, H: \R^{d \times n} \to \R\) such that for all  
$(x,y) \in  \R^{d \times n}$  and all
$\theta \in B_\varepsilon(\theta^*)$ it holds
\[
\left\| \nabla_{\theta} \log p_{X,Y_\theta}(x,y) \right\| \leq G(x,y), \quad 
\left\| \nabla_{\theta}^2 \log p_{X,Y_\theta}(x,y) \right\| \leq H(x,y) 
\]
with 
$$
\int_{\R^{d \times n}} G(x,y) \, p_{X|Y_\theta=y} \,  p_{Y_{\theta^*}} \d x \d y< C, \quad
\int_{\R^{d \times n}} H(x,y) \, p_{X|Y_\theta=y} \, p_{Y_{\theta^*}} \d x \d y< C.
$$
 \item[\textbf{(A2)}]\phantomsection\label{ass:A2}(\textit{Local identifiability of $\theta^\ast$}) 
The \emph{Fisher information matrix} 
$$\mathcal{I}(\theta^\ast) \coloneqq -\mathbb{E}_{x,y}[\nabla^2_{\theta} \log p_{X,Y_{\theta^\ast}}] $$ 
satisfies \( \mathcal{I}(\theta^*) \geq \lambda_{\ast} I_d \), for some $\lambda_{\ast} >0$ and denote the smallest constant $\mu^\ast$ with \( \mathcal{I}(\theta^*) \leq \mu^\ast I_d \) with $\mu^\ast > 0$.
 \item[\textbf{(A3)}]\phantomsection\label{ass:A3}
The map \( \theta \mapsto p_{X|Y_\theta=y} \) is locally 
 $L(y)$- Lipschitz in the Wasserstein-1 distance, i.e. for all 
 $(x,y) \in \R ^{d \times n}$
  $$
    W_1\left( P_{X|Y_{\theta_1}=y}, p_{X|Y_{\theta_2}=y} \right) \leq L(y)\| \theta_1 - \theta_2 \| \quad \forall \theta_1,\theta_2 \in B_\varepsilon(\theta^*) 
    $$  
 \item[\textbf{(A4)}]\label{ass:A4}The map \( x \mapsto \nabla_{\theta} \log p_{X,Y_\theta}(x,y) \) is $L_g(y,\theta)$-Lipschitz in \( x \), i.e.
    \[
    \left\| \nabla_{\theta} \log p_{X,Y_\theta}(x_1,y) - \nabla_{\theta} \log p_{X,Y_\theta}(x_2, y) \right\| 
    \leq  L_g  (y,\theta) \| x_1 - x_2 \| \quad \forall y \in \R^n, x_1,x_2 \in \mathbb{R}^d
    \]  
 \item[\textbf{(A5)}]\phantomsection\label{ass:A5}
  There exists $\bar \varepsilon >0$ such that for each $\varepsilon < \bar \varepsilon$, we have
    $$
    \gamma = \gamma(\varepsilon) = \sup\limits_{\theta\in B_\varepsilon(\theta^*)} \mathbb{E}_{y \sim p_{Y_{\theta^*}}}[L(y) L_g(\theta, y) ]  < \infty.
    $$  
 \item[\textbf{(A5F)}]\phantomsection\label{ass:A5F}
  There exists $\bar \varepsilon >0$ such that for each $\varepsilon < \bar \varepsilon$, we have
    $$
    \gamma = \gamma(\varepsilon) = \sup\limits_{\theta\in B_\varepsilon(\theta^*) }\mathbb{E}_{y \sim p_{Y_{\theta^*}}}[L(y) L_g({M}(\theta), y) ]  < \infty.
    $$    
 \item[\textbf{(A5-SG)}]\phantomsection\label{ass:A5SG}
$Y$ is sub-Gaussian with constant $K$ \cite{Vershynin_2018}, further there exist two continuous functions $h_1, h_2$, such that the constants $L(Y) \leq h_1(\| Y \|) $ and $L_g(\theta, Y) \leq h_2(\| Y \|)$ for each $\theta \in B_{\varepsilon}(\theta^*)$.
\end{itemize}

\subsection{FOS Framework}
In this section, we show that \hyperref[ass:A1]{(A1)}-\hyperref[ass:A4]{(A4)} and \hyperref[ass:A5F]{(A5F)}
are indeed sufficient for the assumptions of Theorem \ref{thm:main}.

\begin{proposition}[Strong Concavity]\label{lemma:concave}
    Let \hyperref[ass:A1]{(A1)} and \hyperref[ass:A2]{(A2)} be fulfilled. Then there exist $\varepsilon>0$ and $\lambda=\lambda(\varepsilon,\lambda_\ast)>0$ 
    such that 
    $\mathcal{Q}(\cdot,\theta^\ast)$ 
    is strong $\lambda$-concave on $B_\varepsilon(\theta^*)$.
\end{proposition}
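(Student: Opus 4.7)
The plan is to derive strong concavity from an upper bound on the Hessian $\nabla_1^2 \mathcal{Q}(\cdot,\theta^\ast)$, exploiting the equivalence with the differential formulation recalled just before the proposition: it suffices to show $\nabla_1^2 \mathcal{Q}(\theta,\theta^\ast) \leq -\lambda I_d$ for all $\theta$ in a sufficiently small ball around $\theta^\ast$. The natural route is to first establish the inequality exactly at $\theta = \theta^\ast$ via the Fisher information lower bound in \assref{ass:A2}, and then to extend it to a neighborhood by a continuity-plus-spectral-gap perturbation.

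First I would use \assref{ass:A1} to interchange two $\theta$-derivatives with the double expectation in the definition of $\mathcal Q$, yielding
$$
\nabla_1^2 \mathcal{Q}(\theta,\theta^\ast)
= \int_{\R^{m+n}} \nabla_{\theta}^2 \log p_{X,Y_\theta}(x,y)\, p_{X\mid Y_{\theta^\ast}=y}(x)\, p_{\theta^\ast}(y)\,\dx x\,\dx y, \quad \theta \in B_\varepsilon(\theta^\ast),
$$
where the interchange is justified by the pointwise bound $\|\nabla_\theta^2 \log p_{X,Y_\theta}(x,y)\| \leq H(x,y)$ together with integrability of $H$ against $p_{X\mid Y_{\theta^\ast}=y}\, p_{\theta^\ast}$ (take $\theta=\theta^\ast$ in \assref{ass:A1}). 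Evaluating at $\theta = \theta^\ast$ and using $p_{X\mid Y_{\theta^\ast}=y}(x)\, p_{\theta^\ast}(y) = p_{X,Y_{\theta^\ast}}(x,y)$, the integral coincides with the quantity appearing in \assref{ass:A2}, so that $\nabla_1^2 \mathcal{Q}(\theta^\ast, \theta^\ast) = -\mathcal{I}(\theta^\ast) \leq -\lambda_\ast I_d$.

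Finally, I would transfer this bound to a neighborhood by continuity of the Hessian. Since $\theta\mapsto \nabla_\theta^2 \log p_{X,Y_\theta}(x,y)$ is continuous by \assref{ass:A1} and uniformly dominated by the $(p_{X\mid Y_{\theta^\ast}=y}\, p_{\theta^\ast})$-integrable function $H$, dominated convergence gives continuity of the map $\theta \mapsto \nabla_1^2 \mathcal{Q}(\theta,\theta^\ast)$ on $B_\varepsilon(\theta^\ast)$. Fixing any $\lambda \in (0,\lambda_\ast)$ and shrinking $\varepsilon$ if necessary then produces $\nabla_1^2 \mathcal{Q}(\theta,\theta^\ast) \leq -\lambda I_d$ uniformly on $B_\varepsilon(\theta^\ast)$, which is the claimed local $\lambda$-strong concavity with $\lambda = \lambda(\varepsilon,\lambda_\ast)$. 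I expect the most delicate step to be the initial differentiation-under-the-integral argument: it is exactly there that the joint integrability requirement in \assref{ass:A1}, where the majorant $H$ is integrable with respect to the specific posterior-weighted measure that defines $\mathcal{Q}$, plays its role; beyond that, the proof reduces to a routine perturbation of a strictly negative-definite matrix.
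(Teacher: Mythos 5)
Your proposal is correct and follows essentially the same route as the paper's proof: differentiate under the integral using the domination in (A1), identify $\nabla_1^2\mathcal{Q}(\theta^\ast,\theta^\ast)=-\mathcal{I}(\theta^\ast)\leq-\lambda_\ast I_d$ via (A2), and extend to a ball by continuity of the Hessian. The only difference is that you spell out the dominated-convergence justification for the interchange and the continuity step, which the paper leaves implicit.
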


\begin{proof}
    Under regularity assumptions of \hyperref[ass:A1]{(A1)} we can swap the following differentiation and integration, and applying subsequently \hyperref[ass:A2]{(A2)}, we get 
\begin{equation}
    \nabla_1^2 \mathcal{Q}(\theta^\ast,  \theta^\ast) = 
    \mathbb{E}_{y \sim p_{Y_{\theta^*}}} \mathbb{E}_{x \sim P_{X|Y_{\theta^*}=y}} \left[
    \nabla_{\theta}^2 \log p_{X,Y_{\theta^\ast}}(x,y)
    \right]  
    = - \mathcal{I}(\theta^\ast)
    \leq -\lambda_{\ast} I_d.\\ 
\end{equation}
By \hyperref[ass:A1]{(A1)} the map
$
\nabla_1^2 \mathcal{Q}(\cdot, \theta^\ast) 
$
is continuous on  $B_\varepsilon(\theta^*)$. Hence for $\varepsilon>0$ small enough, we have that
\begin{equation}
   \nabla_{\theta}^2\mathcal{Q}(\theta, \theta^\ast)  < -\lambda \mathrm{I}_d, \quad \forall \theta\in B_\varepsilon(\theta^*)
\end{equation}
for some $\lambda=\lambda(\varepsilon,\lambda_\ast)>0$, which yields the strong $\lambda$-concavity $\mathcal{Q}(\cdot,\theta^\ast)
$.
\end{proof}

\begin{proposition}[FOS Condition]\label{prop4}
    Let \hyperref[ass:A1]{(A1)}, \hyperref[ass:A3]{(A3)},\hyperref[ass:A4]{(A4)} and \hyperref[ass:A5F]{(A5F)} be fulfilled. Then $\mathcal{Q}$ satisfies a FOS$(\gamma)$ condition with $\gamma$ from \hyperref[ass:A5F]{(A5F)}.
\end{proposition}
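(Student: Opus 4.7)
The plan is to rewrite the difference $\nabla_1 \mathcal{Q}(M(\theta), \theta^*) - \nabla_1 \mathcal{Q}(M(\theta), \theta)$ as the integral of a single $x$-integrand against two different conditional laws (one under $\theta^*$, one under $\theta$), and then control this via a Wasserstein–Lipschitz transport argument. By \hyperref[ass:A1]{(A1)}, differentiation under the integral sign is justified, so for every $\theta', \hat\theta \in B_\varepsilon(\theta^*)$,
$$
\nabla_1 \mathcal{Q}(\theta', \hat\theta) \;=\; \mathbb{E}_{y \sim p_{\theta^*}}\!\Bigl[\, \mathbb{E}_{x \sim P_{X|Y_{\hat\theta}=y}}\bigl[\nabla_\theta \log p_{X,Y_{\theta'}}(x,y)\bigr]\,\Bigr].
$$
Setting $\theta' = M(\theta)$ and subtracting, the difference takes the form
$$
\nabla_1 \mathcal{Q}(M(\theta), \theta^*) - \nabla_1 \mathcal{Q}(M(\theta), \theta) = \mathbb{E}_{y \sim p_{\theta^*}}\!\Bigl[\, \mathbb{E}_{x \sim P_{X|Y_{\theta^*}=y}}[g(x,y)] - \mathbb{E}_{x \sim P_{X|Y_{\theta}=y}}[g(x,y)]\,\Bigr],
$$
where $g(x,y) := \nabla_\theta \log p_{X,Y_{M(\theta)}}(x,y)$ is the same integrand in both inner expectations. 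This is the key reduction: only the conditioning distribution changes, so the problem becomes one of comparing expectations of a fixed Lipschitz function under two close measures.

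Next, I would apply a vector-valued Kantorovich–Rubinstein-type bound fiberwise in $y$. For each $y$, pick any optimal coupling $\pi_y \in \Pi(P_{X|Y_{\theta^*}=y},\, P_{X|Y_{\theta}=y})$ for $\mathcal{W}_1$. Since $g(\cdot, y)$ is $L_g(y, M(\theta))$-Lipschitz by \hyperref[ass:A4]{(A4)}, Jensen's inequality gives
$$
\Bigl\| \mathbb{E}_{\pi_y}[g(x_1,y) - g(x_2,y)] \Bigr\| \leq \mathbb{E}_{\pi_y}\bigl[\|g(x_1,y) - g(x_2,y)\|\bigr] \leq L_g(y, M(\theta))\cdot \mathcal{W}_1\bigl(P_{X|Y_{\theta^*}=y},\, P_{X|Y_{\theta}=y}\bigr).
$$
Invoking \hyperref[ass:A3]{(A3)} bounds the Wasserstein term by $L(y)\|\theta - \theta^*\|$, and taking the outer $y$-expectation together with the triangle inequality yields
$$
\bigl\|\nabla_1 \mathcal{Q}(M(\theta), \theta^*) - \nabla_1 \mathcal{Q}(M(\theta), \theta)\bigr\| \leq \mathbb{E}_{y \sim p_{\theta^*}}\!\bigl[\,L(y)\, L_g(M(\theta), y)\,\bigr]\cdot \|\theta - \theta^*\|.
$$

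Finally, \hyperref[ass:A5F]{(A5F)} is designed precisely so that the prefactor is uniformly bounded over $\theta \in B_\varepsilon(\theta^*)$ by $\gamma$, delivering the claim. The integrability needed to swap $\nabla_\theta$ and $\int$ in the first step is supplied by the dominating function $G$ in \hyperref[ass:A1]{(A1)}, and measurability of the inner expectations follows from the continuity of $\theta \mapsto \log p_{X,Y_\theta}$.

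I expect the only nontrivial technical point to be the vector-valued Kantorovich–Rubinstein step: the classical duality statement is scalar, but as sketched above it extends to $\mathbb{R}^d$-valued Lipschitz integrands essentially for free by passing to an optimal coupling and applying Jensen on $\|\cdot\|$. A subtler concern is that \hyperref[ass:A4]{(A4)} and the inner expectation in \hyperref[ass:A5F]{(A5F)} need to be evaluated at $M(\theta)$ rather than at $\theta$ itself; this is exactly why the assumption is formulated along the orbit $\theta \mapsto M(\theta)$ (which is what distinguishes \hyperref[ass:A5F]{(A5F)} from \hyperref[ass:A5]{(A5)}), so no additional control on $M(\theta)$ is required beyond what the statement already packages in.
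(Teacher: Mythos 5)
Your proposal is correct and follows essentially the same route as the paper: rewrite the gradient difference as the expectation of a fixed integrand $\nabla_\theta \log p_{X,Y_{M(\theta)}}(x,y)$ under the two posteriors, bound the inner difference by $L_g(y,M(\theta))\,\mathcal{W}_1$ via the Lipschitz property from \hyperref[ass:A4]{(A4)}, then apply \hyperref[ass:A3]{(A3)} and integrate in $y$ using \hyperref[ass:A5F]{(A5F)}. The only (immaterial) difference is that you handle the vector-valued Kantorovich--Rubinstein step by passing to an optimal coupling and applying Jensen's inequality, whereas the paper reduces to scalar test functions $\langle \phi_{y,\theta}(\cdot), v\rangle$ via a supremum over unit vectors and then invokes the scalar duality.
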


\begin{proof}    
By the  Kantorovich-Rubenstein duality characterization of Wasserstein-1 metric we have 
$$
W_1(\mu,\nu) = \frac{1}{K}\sup\limits_{\phi \text{ is K-Lip}} \mathbb{E}_{x\sim \mu}[\phi(x)] - \mathbb{E}_{y\sim \nu}[\phi(y)].
$$
Let 
$\phi_{y,\theta}(x) := \nabla_1 \log p_{X,Y_\theta}(x,y)$ 
which is $L_g(y,\theta)$-Lipschitz by \hyperref[ass:A4]{(A4)}. 
Then, under the regularity assumptions of \hyperref[ass:A1]{(A1)}, we have
\begin{align*}
\nabla_1 \mathcal{Q}(\theta, \hat{\theta}) - \nabla_1 \mathcal{Q}(\theta, \theta^*) 
&= 
\mathbb{E}_{y \sim p_{Y_{\theta^*}} }
\Big[
\mathbb E_{x \sim P_{X|Y_{\hat \theta}=y}}
\left[
\nabla_{\theta} \log p_{X,Y_\theta}(x,y)
\right]
\\
&
\quad -
\mathbb E_{x \sim P_{X|Y_{\theta^*}=y}}
\left[
\nabla_{\theta} \log p_{X,Y_\theta}(x,y)
\right]
\Big]  \\
&= 
\mathbb{E}_{y \sim p_{Y_{\theta^*}}} \Big[
\mathbb{E}_{x \sim P_{X|Y_{\hat \theta}=y }}
\left[\phi_{y,\theta}(x)\right]
-
\mathbb E_{x \sim P_{X|Y_{\theta^*}=y}}
\left[\phi_{y,\theta}(x)
\right]
\Big]. 
\end{align*}
Hence for 
$\mu= \mu(y,\hat{\theta}) =  P_{X|Y_{\hat \theta}=y}$ 
and 
$\nu = \nu(y,\theta^\ast) = P_{X|Y_{\theta^*}=y} $, we have 
\begin{align*}
  &
  \|  \mathbb{E}_{x \sim P_{X|Y_{\hat{\theta}}=y} }
  \left[\phi_{y,\theta}(x)\right] 
    - \mathbb{E}_{x \sim P_{X|Y_{\theta^*}=y} }
    \left[\phi_{y,\theta}(x)\right]  
  \|
  \\
  & = 
  \sup_{\| v \| \le 1} 
  |\langle \mathbb{E}_{x \sim P_{X|Y_{\hat{\theta}}=y} }
  \left[ 
  \phi_{y,\theta}(x)
  \right] 
        - \mathbb{E}_{x \sim P_{X|Y_{\theta^*}=y } }
        \left[\phi_{y,\theta}(x)\right], v
  \rangle| 
  \\
  & = 
  \sup_{\| v \| \le 1} 
  | \mathbb{E}_{x \sim P_{X|Y_{\hat{\theta}}=y} } \left[ \langle \phi_{y,\theta}(x), v \rangle \right] 
  - \mathbb{E}_{x \sim P_{X|Y_{\theta^*}=y } } 
  \left[ 
  \langle \phi_{y,\theta}(x), v \rangle 
  \right] 
  |
  \\ 
  & \le L_g(y,\theta) W_1(\mu(y,\hat{\theta}), \nu(y,\theta^\ast)) \\
  &  \le L(y) L_g(\theta, y) \|\hat{\theta}- \theta^\ast\|.
\end{align*}
Integrating over \( y \sim p_{Y_{\theta^*}} \), we obtain
\[
\left\| \nabla_1 \mathcal{Q}(\theta, \hat{\theta}) - \nabla_1 \mathcal{Q}(\theta, \theta^*) \right\|
\leq 
\mathbb{E}_{y \sim p_{Y_{\theta^*}}}[L(y) L_g(\theta, y) ] \|\hat{\theta}- \theta^\ast\| .
\]
Inserting $\theta = M(\hat\theta)$, we obtain with \hyperref[ass:A5F]{(A5F)} that 
\begin{align*}
\left\| \nabla_1 \mathcal{Q}(M(\hat \theta), \hat{\theta}) - \nabla_1 \mathcal{Q}(M(\hat \theta), \theta^*) \right\|
&\leq \mathbb{E}_{y \sim p_{Y_{\theta^*}}}[L(y) L_g(M(\hat \theta), y) ] \|\hat{\theta}- \theta^\ast\| \\
&\leq \gamma \|\hat{\theta}- \theta^\ast\|.
\end{align*}
\end{proof}

\begin{remark}
The main line of argument also works if we replace Wasserstein by the total variation distance.  It is defined as
 \[\mathrm{TV}(P,Q) = \frac{1}{2} \int \vert p(x) - q(x) \vert dx\] for two probability measures $P,Q$ which admit densities $p$ and $q$. This also has a dual formulation, which is given by $$TV(P,Q) = \sup_{\Vert f \Vert_{\infty} \leq 1} \int f d(P-Q).$$

 Hence, we get slightly different assumptions: The score $\nabla_{\theta} \log p_{X,Y_{\theta}}$ now only needs to be element of $L^{\infty}(\mathbb{R}^{d})$ for all $y$, whereas the Lipschitz continuity bound is a bit harder to satisfy. 
\end{remark}

Note that in order to apply the convergence theorem, we must assume that \( 0 \leq \gamma < \lambda \). There are two main reasons why we prefer the following gradient-based framework over the classical EM convergence theory that relies on the FOS condition. 
First, the FOS condition involves the population EM operator \( M \), whose analytical form is often intractable and difficult to estimate numerically, especially in complex models. Second, Lemma~\ref{lem:conv_gs} provides a slightly improved convergence guarantee: instead of requiring \( \gamma < \lambda \), it suffices to assume \( \gamma < c \), where \( c \) is the strong convexity constant of the Q-function. This is a strictly weaker condition in non-degenerate cases. 
Therefore, we adopt the gradient-based framework, which is both more natural for our approximate M-step and yields more accessible convergence criteria.

As it will turn out, the integrability of $\gamma$ will not be true for our mixed noise model. However, luckily assumption \hyperref[ass:A5SG]{(A5SG)} will help us to show a similar statement with high probability. For this we look at the FOS condition in a probabilistic way instead, which means instead of bounding the expectation, we get a corresponding high probability statement. To rigorously formulate it, recall the definition of the $Q$-function and define $q$ via 
\[
\mathcal{Q}(\theta, \hat{\theta}) = \mathbb{E}_{y \sim p_{Y_{\theta^*}}} \, \mathbb{E}_{x \sim p_{X \mid Y_{\hat{\theta}} = y}} \left[ \log p_{X, Y_\theta}(x, y)\right] =   \mathbb{E}_{y \sim p_{Y_{\theta^*}}}[q(y, \hat\theta, \theta)].
\]

\begin{proposition}
Let \hyperref[ass:A1]{(A1)}, \hyperref[ass:A3]{(A3)},\hyperref[ass:A4]{(A4)} and \hyperref[ass:A5SG]{(A5SG)} be fulfilled.
For each $0 < \delta \leq 1$, there exists a measurable subset $\Omega_{\delta} \subset \Omega$ (where $(\Omega, \mathcal{F}, \mathbb{P})$ is the underlying probability space) 
and a constant $C_{\delta}>0$ such that
$$\Vert \nabla_{\theta}  q(Y(\omega),\theta^*, \theta)- \nabla_{\theta}   q(Y(\omega),\hat\theta, \theta) \Vert \leq C_{\delta} \Vert \theta^*-\hat\theta \Vert, \qquad \forall\omega\in\Omega_{\delta}, \theta \in B_{\varepsilon}(\theta^*)
$$
holds with probability $\mathbb{P}(\Omega_{\delta}) \geq 1-\delta$.
\end{proposition}
\begin{proof}
As in the proof of Proposition 2, we can bound, 
\begin{align*}
 \Vert \nabla_{\theta} q(y,\theta^*, \theta)- \nabla_{\theta}  q(y, \hat\theta, \theta) \Vert &  \le L(y) L_g(\theta, y) \|\hat{\theta}- \theta^\ast\|.
\end{align*}
The sub-Gaussianity from \hyperref[ass:A5SG]{(A5SG)} yields a bound 
$$\mathbb{P}(\Vert Y \Vert \geq t) \leq 2 \exp(- t^2 / K).$$ 
Via the continuity of the functions $h_1$ and $h_2$, we can then bound $L(Y)$ and $L_g(\theta,Y)$ with high probability as required in the statement. 
\end{proof}

\subsection{Gradient Descent Based Framework}
\label{sec:gradient_descent}
Since we already verified the $\lambda$-concavity, we only need to verify $\gamma$-gradient smoothness and $\mu$-smoothness, which we do in the following propositions.

\begin{proposition}[Gradient Smoothness Condition]
    Let \hyperref[ass:A1]{(A1)}, \hyperref[ass:A3]{(A3)}, \hyperref[ass:A4]{(A4)} and \hyperref[ass:A5]{(A5)} be fulfilled. Then $\mathcal{Q}$ satisfies the $\gamma$-gradient smoothness condition  with $\gamma$ from \hyperref[ass:A5]{(A5)}.
\end{proposition}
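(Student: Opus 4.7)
The plan is to mimic the proof of \Cref{prop4} verbatim up to the final line, where the only change is which parameter we feed into the Lipschitz constant $L_g$. The key observation is that the FOS proof never used $\theta = M(\hat\theta)$ until the very last step; the bound
\[
\left\| \nabla_1 \mathcal{Q}(\theta, \hat{\theta}) - \nabla_1 \mathcal{Q}(\theta, \theta^*) \right\|
\leq \mathbb{E}_{y \sim p_{\theta^*}}\bigl[L(y)\, L_g(\theta, y)\bigr]\, \|\hat{\theta} - \theta^*\|
\]
holds for all $\theta, \hat\theta \in B_\varepsilon(\theta^*)$ under \hyperref[ass:A1]{(A1)}, \hyperref[ass:A3]{(A3)} and \hyperref[ass:A4]{(A4)}, by Kantorovich--Rubinstein duality applied to the test function $\phi_{y,\theta}(x) = \nabla_\theta \log p_{X,Y_\theta}(x,y)$.

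First, I would restate (for the reader's convenience) the general two-parameter bound above, noting that its derivation is identical to the one in the proof of \Cref{prop4}: swap differentiation and integration via \hyperref[ass:A1]{(A1)}, rewrite the difference of gradients as a difference of expectations of $\phi_{y,\theta}$ under the two posteriors, pass to the dual formulation of $\mathcal{W}_1$ through the supremum over $\|v\|\le 1$ of linear functionals, and use the Lipschitz bound from \hyperref[ass:A4]{(A4)} together with \hyperref[ass:A3]{(A3)}.

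Next, I would specialize $\hat\theta \coloneqq \theta$, which directly yields
\[
\left\| \nabla_1 \mathcal{Q}(\theta, \theta) - \nabla_1 \mathcal{Q}(\theta, \theta^*) \right\|
\leq \mathbb{E}_{y \sim p_{\theta^*}}\bigl[L(y)\, L_g(\theta, y)\bigr]\, \|\theta - \theta^*\|,
\]
and then take the supremum of the prefactor over $\theta \in B_\varepsilon(\theta^*)$. Here \hyperref[ass:A5]{(A5)} (rather than \hyperref[ass:A5F]{(A5F)}) applies, because the second argument of $L_g$ is $\theta$ itself, not $M(\theta)$. This gives the desired $\gamma$-gradient smoothness constant $\gamma = \gamma(\varepsilon)$ from \hyperref[ass:A5]{(A5)}.

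There is no real obstacle here; the proof is essentially a corollary of the argument for \Cref{prop4}, and the only subtlety worth pointing out is the distinction between \hyperref[ass:A5]{(A5)} and \hyperref[ass:A5F]{(A5F)}, which reflects the difference between gradient smoothness (evaluated at $\theta$) and the FOS condition (evaluated at $M(\theta)$). This is precisely why \hyperref[ass:A5]{(A5)} is the cleaner and more verifiable assumption in the gradient-descent framework, as mentioned in the preceding remark.
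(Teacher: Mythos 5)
Your proposal is correct and matches the paper's proof exactly: the paper likewise notes that the argument of Proposition~\ref{prop4} carries over verbatim with $\hat\theta$ specialized to $\theta$ and \hyperref[ass:A5]{(A5)} invoked in place of \hyperref[ass:A5F]{(A5F)} at the final step. Your additional remark on why the second argument of $L_g$ changes from $M(\theta)$ to $\theta$ is accurate and consistent with the paper's discussion.
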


\begin{proof}
The proof follows exactly the same lines as those of Proposition \ref{prop4} with $\mathcal Q(\theta,\hat \theta)$ replaced by $\mathcal Q(\theta, \theta)$ 
and just (A5) applied in the final step.
\end{proof}

\begin{proposition}[$\mu$-Smoothness]\label{lemma:musmooth}
    Let \hyperref[ass:A1]{(A1)} and \hyperref[ass:A2]{(A2)} be fulfilled.  Then there exists $\mu>0$ such that
$\mathcal{Q}(\cdot,\theta^\ast)$ is strongly $\mu$-smooth on $B_\varepsilon(\theta^*)$.
\end{proposition}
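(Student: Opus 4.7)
The plan is to mirror the proof of Proposition~\ref{lemma:concave}, now extracting a \emph{lower} bound on the Hessian $\nabla_1^2 \mathcal{Q}(\cdot, \theta^*)$ instead of an upper bound. Under the $C^2$-regularity provided by \hyperref[ass:A1]{(A1)}, the $\mu$-smoothness inequality is equivalent, via a Taylor expansion along the segment between $\theta_1$ and $\theta_2$, to the operator bound $\nabla_1^2 \mathcal{Q}(\theta, \theta^*) \succeq -\mu I_d$ for all $\theta \in B_\varepsilon(\theta^*)$. So it suffices to establish such a lower bound on the Hessian.

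First, I would invoke \hyperref[ass:A1]{(A1)} to interchange differentiation and integration in the definition of $\mathcal{Q}(\cdot, \theta^*)$; the integrable dominating functions $G$ and $H$ justify applying dominated convergence. This yields
\[
\nabla_1^2 \mathcal{Q}(\theta^*, \theta^*) = \mathbb{E}_{y \sim p_{\theta^*}} \mathbb{E}_{x \sim P_{X \mid Y_{\theta^*} = y}} \bigl[ \nabla_\theta^2 \log p_{X, Y_{\theta^*}}(x, y) \bigr] = -\mathcal{I}(\theta^*).
\]
The upper bound in \hyperref[ass:A2]{(A2)}, namely $\mathcal{I}(\theta^*) \preceq \lambda_2 I_d$, then gives $\nabla_1^2 \mathcal{Q}(\theta^*, \theta^*) \succeq -\lambda_2 I_d$. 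Unlike in Proposition~\ref{lemma:concave}, it is the \emph{upper} eigenvalue bound on the Fisher information from \hyperref[ass:A2]{(A2)} that is decisive here.

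Second, \hyperref[ass:A1]{(A1)} and dominated convergence also imply continuity of the map $\theta \mapsto \nabla_1^2 \mathcal{Q}(\theta, \theta^*)$ on $B_\varepsilon(\theta^*)$. Shrinking $\varepsilon > 0$ if necessary, I can therefore pick some $\mu > \lambda_2$ such that $\nabla_1^2 \mathcal{Q}(\theta, \theta^*) \succeq -\mu I_d$ holds uniformly on $B_\varepsilon(\theta^*)$. Integrating this bound twice along the segment $t \mapsto \theta_2 + t(\theta_1 - \theta_2)$, $t \in [0,1]$, delivers the desired $\mu$-smoothness inequality.

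The main obstacle is essentially bookkeeping: one must recognize that the argument is formally symmetric to Proposition~\ref{lemma:concave}, with the two eigenvalue bounds in \hyperref[ass:A2]{(A2)} playing swapped roles. No further assumptions beyond \hyperref[ass:A1]{(A1)} and \hyperref[ass:A2]{(A2)} are required, and the constant $\mu$ can in fact be taken arbitrarily close to $\lambda_2$ at the cost of shrinking $\varepsilon$.
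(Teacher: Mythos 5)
Your proposal is correct and follows essentially the same route as the paper's proof: interchange differentiation and integration via \hyperref[ass:A1]{(A1)} to identify $\nabla_1^2\mathcal{Q}(\theta^*,\theta^*) = -\mathcal{I}(\theta^*) \succeq -\lambda_2 I_d$ using the upper Fisher-information bound in \hyperref[ass:A2]{(A2)}, then use continuity of the Hessian and shrink $\varepsilon$ to obtain the uniform lower bound $\succeq -\mu I_d$. Your version merely spells out the final Taylor-expansion step that the paper leaves implicit.
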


\begin{proof}
    Under regularity assumptions of \hyperref[ass:A1]{(A1)} we can swap the following differentiation and integration, and applying subsequently \hyperref[ass:A2]{(A2)}, we get 
\begin{equation}
    \nabla_1^2 \mathcal{Q}(\theta^\ast,  \theta^\ast) = 
    \mathbb{E}_{y \sim p_{Y_{\theta^*}}} \mathbb{E}_{x \sim P_{X|Y_{\theta^*}=y}} \left[
    \nabla_{\theta}^2 \log p_{X,Y_{\theta^\ast}}(x,y)
    \right]  \geq -\mu^\ast \mathrm{Id}.
\end{equation}
By \hyperref[ass:A1]{(A1)} the map
$
\nabla_1^2 \mathcal{Q}(\cdot, \theta^\ast) 
$
is continuous on  $B_\varepsilon(\theta^*)$. Hence, for $\varepsilon>0$ small enough, $\mu$-smoothness follows. 
\end{proof}

\subsection{Step Size Choice Revisited }
\label{Sec:stepsize}

The result in \Cref{thm:main_2}, in particular the constraint to $\gamma< \lambda$ stems from the a prior choice of the step size which is optimal  in the classic gradient descent mechanism for $\lambda$-strong convex and $\mu$-smooth objectives. Alternatively, we can choose the step-size later and hence relax the assumptions slightly. This is the goal of this subsection. 

As a classic result for a (local) $\lambda$-strong concave and $\mu$-smooth function $q$ with unique maximizer $\theta^\ast$, the gradient ascent iterates 
$$
\theta^{(k+1)} = \theta^{(k)} + \tau \nabla q(\theta^{(k)}), \qquad \theta^{(0)}\in B_{\epsilon}(\theta^\ast)
$$
satisfy
$$
\|\theta^{(k+1)}-\theta^\ast \|^2 \leq \left( 1 - \frac{2\mu\lambda}{\mu+\lambda}\tau\right)\|\theta^{(k)}-\theta^\ast\|^2.
$$
Let $\mathcal{Q}$ be locally $\gamma$-gradient smooth. Then, we have for
$q(\theta):=\mathcal{Q}(\theta,\theta^\ast)$ that the EM-iterates $\theta^{(k)}$ fulfill 
\begin{align*}
    \|\theta^{(k+1)}-\theta^\ast \| &= 
    \|\theta^{(k)} + \tau \nabla_1 \mathcal{Q}(\theta, \theta^{(k)}) \big|_{\theta = \theta^{(k)}}-\theta^\ast \| \\
    &\leq  
    \|\theta^{(k)} + \tau \nabla q(\theta^{(k)}) -\theta^\ast\| +
    \tau \| \nabla_1
    \mathcal{Q}(\theta, \theta^{(k)}) \big|_{\theta = \theta^{(k)}} - \nabla q(\theta^{(k)})  \| \\
    &\leq \left(
    \sqrt{ 1 - \frac{2\mu\lambda}{\mu+\lambda}\tau } + \tau\gamma \right)\| \theta^{(k)} - \theta^\ast\|.
\end{align*}
Consequently, convergence can be assured if we find a step size $\tau>0$ such that 
\begin{align} \label{starr}
    0 < \sqrt{1- \tau c} + \tau \gamma < 1 \quad \textrm{for }
    c \coloneqq \frac{2\mu\lambda}{\lambda + \mu}.
\end{align}
We have the following lemma.

\begin{lemma}\label{lem:conv_gs}
    For any $\mu,\lambda>0$ and $\gamma$ such that $0\leq \gamma < c=\frac{2\mu\lambda}{\mu+\lambda} $ and $\theta^0 \in B_{\epsilon}(\theta^\ast)$ for $\epsilon$ small enough,  there exists a step size $\tau\in (0,c^{-1})$ such the gradient ascent iterates in the EM-Algorithm converges to $\theta^\ast$.
\end{lemma}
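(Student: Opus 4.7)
The plan is to read the lemma directly off of the contraction estimate \eqref{starr} that was derived in the discussion preceding the statement. That estimate shows that, for any step size $\tau > 0$, the gradient-ascent EM iterates satisfy
\[
\|\theta^{(k+1)} - \theta^*\| \leq \rho(\tau)\,\|\theta^{(k)} - \theta^*\|, \qquad \rho(\tau) := \sqrt{1 - \tau c} + \tau \gamma.
\]
Iterating this bound gives $\|\theta^{(k)} - \theta^*\| \leq \rho(\tau)^k \|\theta^{(0)} - \theta^*\|$, so convergence to $\theta^*$ follows as soon as we exhibit a single $\tau_0 \in (0, 1/c)$ with $\rho(\tau_0) \in (0,1)$. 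The applicability of \eqref{starr} itself is already guaranteed by \Cref{lemma:concave}, \Cref{lemma:musmooth} and the $\gamma$-gradient-smoothness shown in the previous subsection.

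The existence of such a $\tau_0$ is a pure continuity argument. I would view $\rho$ as a continuous function on the closed interval $[0, 1/c]$ and evaluate at the right endpoint:
\[
\rho(1/c) \;=\; 0 + \gamma/c \;<\; 1,
\]
where the strict inequality uses the standing hypothesis $\gamma < c$. By continuity there is a neighborhood of $1/c$ on which $\rho$ stays strictly below $1$, and any $\tau_0$ in the intersection of this neighborhood with $(0, 1/c)$ is a valid choice. The positivity $\rho(\tau_0) > 0$ is automatic, since for $\tau_0 \in (0, 1/c)$ the square root $\sqrt{1 - \tau_0 c}$ is strictly positive and $\tau_0 \gamma \geq 0$.

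I do not foresee any technical obstacle; this is an intermediate-value statement. The only subtlety worth flagging explicitly is that $\rho(0) = 1$, so the contraction cannot be recovered from arbitrarily small step sizes. One really has to choose $\tau_0$ near the upper end of the admissible interval, which is the opposite of what the standard heuristic (and the fixed choice $\tau = 2/(\mu+\lambda)$ used in \Cref{thm:main_2}) might suggest. Conceptually, this is precisely why the present framework can relax the assumption to $\gamma < c = 2\mu\lambda/(\mu+\lambda)$ instead of the stricter $\gamma < \lambda$ required under the a priori optimal step size.
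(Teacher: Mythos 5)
Your argument is correct and proves the same statement, but by a genuinely different route than the paper. The paper solves the inequality $\sqrt{1-\tau c}+\tau\gamma<1$ explicitly: for $\gamma>0$ it is equivalent to $\tau>\frac{2\gamma-c}{\gamma^2}$ together with $\tau<\min\{c^{-1},\gamma^{-1}\}$, and the feasible interval of step sizes is shown to be nonempty precisely when $\gamma<c$. You instead evaluate $\rho(\tau)=\sqrt{1-\tau c}+\tau\gamma$ at the endpoint $\tau=c^{-1}$, where $\rho(c^{-1})=\gamma/c<1$, and perturb into the open interval by continuity. This is shorter and avoids the algebra; what the paper's computation buys in exchange is the exact admissible step-size range and the fact that $\gamma<c$ is necessary, not just sufficient, for it to be nonempty. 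One side remark of yours is inaccurate (though harmless for the proof): it is not true in general that ``the contraction cannot be recovered from arbitrarily small step sizes.'' Since $\rho(0)=1$ and $\rho'(0)=\gamma-\tfrac{c}{2}$, every sufficiently small $\tau>0$ already gives $\rho(\tau)<1$ whenever $\gamma<c/2$; the paper's lower bound $\frac{2\gamma-c}{\gamma^2}$ is negative, hence vacuous, in that regime, and only for $\gamma\in[c/2,c)$ must $\tau$ be pushed toward the upper end. Finally, note that both your argument and the paper's rely on the preliminary contraction estimate \eqref{starr} holding for $\tau$ all the way up to $c^{-1}$, whereas the classical bound $\|\theta^{(k+1)}-\theta^\ast\|^2\le(1-\tau c)\|\theta^{(k)}-\theta^\ast\|^2$ is usually guaranteed only for $\tau\le 2/(\mu+\lambda)\le c^{-1}$; this caveat applies equally to the paper's own proof and is not a defect specific to yours, but it is worth flagging because your construction (and the paper's, when $\gamma$ is close to $c$) selects $\tau$ exactly in the regime where the estimate is most delicate.
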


\begin{proof}
Since $\tau < c^{-1}$, the square root is in $(0,1)$. 
 For $\gamma = 0$ we are done. Assume that $\gamma >0$.
 Then \eqref{starr} implies immediately
$\tau< \gamma^{-1}$ so that
$\tau < \min\{ c^{-1},\gamma^{-1} \}$ and then
\begin{align*}
      (1-\tau c) < (1 - \tau \gamma)^2 
   \quad
    \Leftrightarrow 
    \quad
    \frac{2\gamma - c}{\gamma^2}  < \tau.
\end{align*}
Hence to get a non-empty set of step sizes,  we need
$$
\frac{2\gamma - c}{\gamma^2} < \min\left\{ c^{-1},\gamma^{-1} \right\}.
$$
While $\frac{2\gamma - c}{\gamma^2} < \frac{1}{c}$ is always fulfilled for $c \not = \gamma$,
the condition 
$\frac{2\gamma - c}{\gamma^2} < \frac{1}{\gamma}$ holds true if and only if 
$\gamma < c$.
\end{proof}

Since $\mu \geq \lambda$, it holds  
    $
    c  \geq \lambda,
    $
    with equality if and only if $\mu=\lambda$.
In particular in the case $\mu> \lambda$, we can relax the constraints on $\gamma$ from \cite{em_conv}, i.e. $0\leq \gamma < \lambda$, for convergence to the range $0\leq \gamma < c$ with $c$ from \eqref{starr}. The resulting improvement is illustrated in \Cref{fig:c_vs_lambda}.

\begin{figure}
    \centering
    \includegraphics[width=0.65\linewidth]{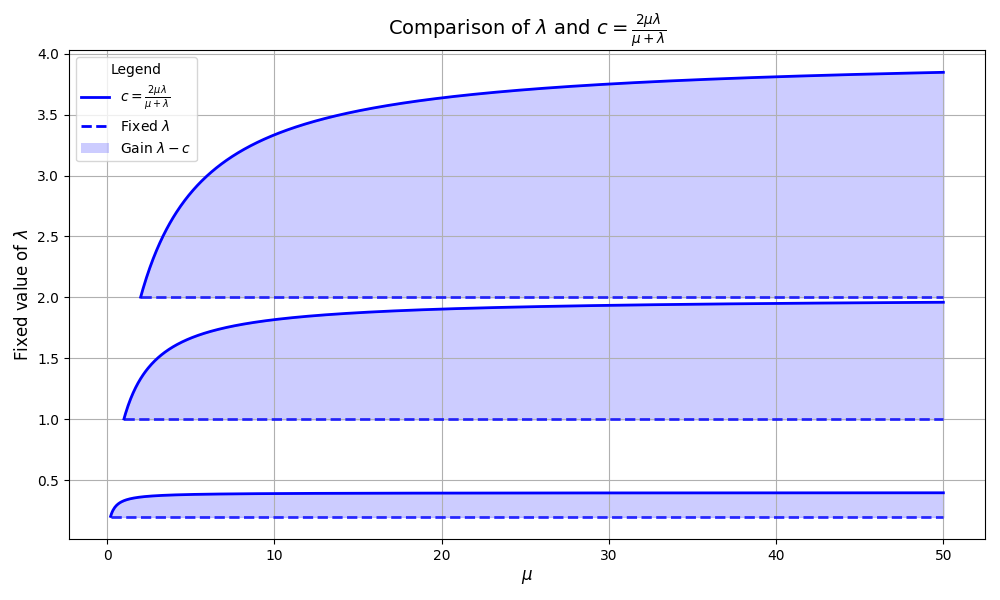}
    \caption{Comparison of $c=\frac{2\mu\lambda}{\mu+\lambda}$ and $\lambda$ for fixed values of $\lambda =0.2,1,2$ for a range of $\mu > \lambda$.}
    \label{fig:c_vs_lambda}
\end{figure}

\section{Verification of Assumptions for Mixed Noise Model}
\label{sec:verification}
Here, we verify the assumptions of our mixed noise error model, in order to apply our proposed theory. 
Throughout the analysis, we impose the following additional assumptions to ensure well-posedness of the model and validity of the theoretical results:
\begin{itemize}
    \item \textbf{Regularity and growth of the forward model:} Each component \( f_i: \mathbb{R}^{d} \to \mathbb{R} \) of $F=(f_1,\ldots,f_n)^\top$ for $i=1,\ldots,n$ is continuously differentiable, bounded,  and satisfies 
    the bounded gradient conditions:
    \[
    \|\nabla f_i(x)\| \leq C_{f'} \quad \text{for all } x \in \mathbb{R}^{d}.
    \]

   \item \textbf{Moment bounds:} Existence of \textit{first} and \textit{second} moment in $Y$-space under the joint distribution,
$$\int_{\R^{d \times n}}  |y|^k \, p_{X|Y_\theta=y} \,  p_{Y_{\theta^*}} \d x \d y< \infty,
$$
for k = 0,1 and 2.

    \item \textbf{Prior regularity:} The logarithm of the prior density \( p_X \) is Lipschitz continuous and has finite moments up to order 4:
    \[
    \mathbb{E}_{x \sim p_X}[\|x\|^4] < \infty.
    \]

    \item \textbf{Compact parameter domain:} The parameter space \( \Theta \subset \mathbb{R}^2 \) is compact with strictly positive lower bounds
    \[
    a^2 \in [a_{\min}, a_{\max}], \quad b^2 \in [b_{\min}, b_{\max}], \quad \text{with } a_{\min}, b_{\min} > 0.
    \]

    \item \textbf{Non-degeneracy of the forward map:} For at least one \( i \), the map \( f_i \) is non-constant on the support of \( p_X \), ensuring identifiability of the noise components and strict positivity of the Fisher information matrix.
\end{itemize}

We consider the space 
$$\Theta \coloneqq \{\theta = (a^2, b^2): 
a^2 \in [a_{\text{min}}, a_{\max}], b^2 \in [b_{\text{min}}, b_{\max}] \}
$$ 
where $a_{min}, b_{min} >0$. 
Defining for $F = (f_1,\ldots,f_n)^\tT$
and $y = (y_1,\ldots,y_n)^\tT$ the functions
$$
\sigma_i (x) \coloneqq a^2 + b^2 f_i(x)^2, \quad 
r_i(x) \coloneqq r_i(x,y)= y_i-f_i(x)
$$
the log likelihood is given by 
\begin{align*}
\log p_{Y_\theta|X=x}(y)
&= \log \mathcal N \left( y|(F(x), a^2 I_n + b^2 \text{diag} (F(x)^2 \right)\\
&=
-\frac{1}{2}\left(\sum_{i=1}^n \log (2 \pi \sigma_i(x)) + \frac{r_i(x,y)^2}{\sigma_i(x) }\right).
\end{align*}

\paragraph{Assumption (A1)}

To verify assumption \hyperref[ass:A1]{(A1)}, we calculate the derivatives of the log-likelihood
\[
\log p_{Y_\theta|X=x}(y) = -\frac{1}{2} \sum_{i=1}^n \left( \log (2 \pi \sigma_i(x)) + \frac{r_i(x,y)^2}{\sigma_i(x)} \right),
\]
where \( r_i(x,y) := y_i - f_i(x) \) and \( \sigma_i(x) := a^2 + b^2 f_i(x)^2 \). Then:

\begin{align*}
\partial_{a^2} \log p_{Y_\theta|X=x}(y) 
&= -\frac{1}{2} \sum_{i=1}^n \left( \frac{1}{\sigma_i(x)} - \frac{r_i(x,y)^2}{\sigma_i(x)^2} \right), \\
\partial_{b^2} \log p_{Y_\theta|X=x}(y) 
&= -\frac{1}{2} \sum_{i=1}^n f_i(x)^2 \left( \frac{1}{\sigma_i(x)} - \frac{r_i(x,y)^2}{\sigma_i(x)^2} \right).
\end{align*}

For the second-order partial derivatives (Hessian), we obtain:
\begin{align*}
\partial_{a^2}^2 \log p_{Y_\theta|X=x}(y) 
&= \frac{1}{2} \sum_{i=1}^n \left( \frac{1}{\sigma_i(x)^2} - \frac{2 r_i(x,y)^2}{\sigma_i(x)^3} \right), \\
\partial_{a^2} \partial_{b^2} \log p_{Y_\theta|X=x}(y) 
&= \frac{1}{2} \sum_{i=1}^n f_i(x)^2 \left( \frac{1}{\sigma_i(x)^2} - \frac{2 r_i(x,y)^2}{\sigma_i(x)^3} \right), \\
\partial_{b^2}^2 \log p_{Y_\theta|X=x}(y) 
&= \frac{1}{2} \sum_{i=1}^n f_i(x)^4 \left( \frac{1}{\sigma_i(x)^2} - \frac{2 r_i(x,y)^2}{\sigma_i(x)^3} \right).
\end{align*}

We assume that the required moments exist under the distribution \( p_{X|Y_\theta = y} \, p_{Y_{\theta^*}} \).

Since the log-likelihood is smooth in \( \theta \in \Theta \), and the derivatives involve rational expressions in \( f_i(x) \), \( r_i(x,y) \), and \( \sigma_i(x) \), all of which are smooth functions of \( x \) and \( y \), the gradient and Hessian are continuous. 
Furthermore, under moment bounds on \( y \), the gradient and Hessian norms are integrable, since all involved components are uniformly bounded in $x$ as rational polynomials in $f_i(x)$ with nominator of degree larger or equal to the nominators.

For example, let $z=f_i(x)\in\mathbb{R}$. Then 
\begin{align*}
 \left|\frac{1}{\sigma_i(x)^2} - \frac{2 r_i(x,y)^2}{\sigma_i(x)^3}\right| &=
 \left| 
    \frac{1}{(a^2+b^2 z^2)^2} -  \frac{2z^2}{(a^2+b^2z^2)^3} + \frac{4zy_i - 2y_i^2}{(a^2+b^2z^2)^3} 
 \right|
 \\
 &\leq 
 \underbrace{\left|
\frac{1}{(a^2+b^2 z^2)^2} -  \frac{2z^2}{(a^2+b^2z^2)^3}
 \right|}_{\leq c_0 < \infty \forall z\in\mathbb{R}}
 + \underbrace{\left| \frac{4z}{(a^2+b^2z^2)^3}\right|}_{\leq c_1 < \infty \forall z\in\mathbb{R}} |y_i| + \underbrace{\left|\frac{2}{(a^2+b^2z^2)^3} \right|}_{\leq c_2<\infty\forall z\in\mathbb{R}}|y_i|^2 \\
 &\leq c_0 + c_1 |y_i| + c_2 |y_i|^2.
\end{align*}
Consequently, for each summand exemplarily for the $\partial_{a^2}$ term, we find
$$
\int_{\R^{d \times n}} \frac{1}{\sigma_i(x)^2} - \frac{2 r_i(x,y)^2}{\sigma_i(x)^3} \, p_{X|Y_\theta=y} \,  p_{Y_{\theta^*}} \d x \d y \leq \int_{\R^{d \times n}} c_0 + c_1 |y_i| + c_2 |y_i|^2 \, p_{X|Y_\theta=y} \,  p_{Y_{\theta^*}}\d x \d y < \infty.
$$
The other bounds for the majorants are obtained analogously, using the gradient condition on the forward operator $f$.

Therefore, the conditions of Assumption \assref{A1}—namely, differentiability of \( \theta \mapsto \log p_{X,Y_\theta}(x,y) \) and integrable bounds on its gradient and Hessian—are satisfied.
\paragraph{Assumption (A2)}
\paragraph{Existence of $\lambda_{\ast}>0$:} We derive conditions to ensure the existence of a uniform lower bound $\lambda_{\ast} > 0$ for the Fisher information matrix.

Recall that the likelihood under the mixed noise model is given by
\[
p_{Y|X=x}(y) \propto \prod_{i=1}^n \exp\left(-\frac{1}{2} \left[\log \sigma_i(x) + \frac{(y_i - f_i(x))^2}{\sigma_i(x)} \right] \right)
= \prod_{i=1}^n \frac{1}{\sqrt{\sigma_i(x)}} \exp\left( -\frac{1}{2} \frac{(y_i - f_i(x))^2}{\sigma_i(x)} \right),
\]
where we define $\sigma_i(x) := a^2 + b^2 f_i(x)^2$ and $r_i(x, y) := y_i - f_i(x)$. Consequently,
\[
p_{Y|X=x}(y) = \mathcal{N}(F(x), \bm{\Sigma}(x)), \quad F(x) = (f_i(x))_{i=1}^n,\quad \bm{\Sigma}(x) = \operatorname{diag}(\sigma_1(x), \dots, \sigma_n(x)).
\]
From this, we obtain
\[
\mathbb{E}_{Y|X=x}[r_i(x,y)] = 0, \quad \mathbb{E}_{Y|X=x}[r_i(x,y)^2] = \sigma_i(x).
\]

Now consider the joint density $p_{Y,X}(y, x \mid \theta) = p_{Y|X=x}(y \mid \theta) p_X(x)$. Since the prior $p_X$ is independent of $\theta$, the Fisher information matrix becomes
\[
\mathcal{I}(\theta) := -\mathbb{E}_{X,Y}[\nabla^2_\theta \log p_{X,Y_{\theta}}(x,y)] = -\mathbb{E}_X \left[ \mathbb{E}_{Y|X=x}[\nabla^2_\theta \log p_{Y_{\theta}|X=x}(y)] \right].
\]
Using the derivatives computed in the verification of Assumption (A1), we obtain:
\[
\mathcal{I}(\theta) = \frac{1}{2} \sum_{i=1}^n \int_{\mathbb{R}^{d}} 
\frac{1}{\sigma_i(x)^2}
\begin{pmatrix}
1 & f_i(x)^2 \\
f_i(x)^2 & f_i(x)^4
\end{pmatrix}
p_X(x)\, \mathrm{d}x.
\]

Define for each $i$:
\[
A_i := \int \frac{1}{\sigma_i(x)^2} p_X(x)\, \mathrm{d}x, \quad 
B_i := \int \frac{f_i(x)^2}{\sigma_i(x)^2} p_X(x)\, \mathrm{d}x, \quad 
C_i := \int \frac{f_i(x)^4}{\sigma_i(x)^2} p_X(x)\, \mathrm{d}x,
\]
and set
\[
A = \sum_{i=1}^n A_i, \quad B = \sum_{i=1}^n B_i, \quad C = \sum_{i=1}^n C_i.
\]
Then the Fisher information matrix becomes:
\[
\mathcal{I}(\theta) = \frac{1}{2} 
\begin{pmatrix}
A & B \\
B & C
\end{pmatrix}.
\]

A symmetric $2 \times 2$ matrix is positive definite if and only if its leading principal minors are positive:
\[
A > 0 \quad \text{and} \quad AC - B^2 > 0.
\]

\begin{enumerate}
\item \textbf{Positivity of $A$:} Since $a^2 + b^2 f_i(x)^2 > 0$ for all $x$ (because $a^2 > 0$ and $b^2 \geq 0$), we have $1/\sigma_i(x)^2 > 0$. If $p_X$ has full support and $(a^2, b^2) \in \Theta$ is bounded away from zero (i.e., $a_{\min} > 0$), then each $A_i > 0$, hence $A > 0$.

\item \textbf{Strict positivity of $AC - B^2$:} 
Apply the Cauchy–Schwarz inequality to each $i$:
\[
B_i^2 = \left( \int \frac{f_i(x)^2}{\sigma_i(x)^2} p_X(x)\, \mathrm{d}x \right)^2 
\leq \left( \int \frac{1}{\sigma_i(x)^2} p_X(x)\, \mathrm{d}x \right)
\left( \int \frac{f_i(x)^4}{\sigma_i(x)^2} p_X(x)\, \mathrm{d}x \right) 
= A_i C_i,
\]
with equality if and only if $f_i(x) \equiv \mathrm{const}$. Summing over $i$ and applying Cauchy–Schwarz again:
\[
B^2 = \left( \sum_i B_i \right)^2 \leq \left( \sum_i \sqrt{A_i C_i} \right)^2 \leq AC,
\]
with strict inequality if $f_i \not\equiv \mathrm{const}$ for at least one $i$. This is generically satisfied in inverse problems, as constant $f$ would prevent identifiability of noise components.
\end{enumerate}

\paragraph{Assumption (A3)}

To verify Assumption \hyperref[ass:A3]{(A3)}, we follow the stability result in \cite[Theorem 14]{sprungk2020local} in order to obtain Lipschitz continuity of the posterior \( \theta \mapsto P_{X|Y_\theta=y} \) in the Wasserstein-1 distance.

The key requirement is to bound the difference between log-likelihoods, i.e.,
\begin{align*}
&\left\| \log p_{Y_{\theta_1}|X=x}(y) - \log p_{Y_{\theta_2}|X=x}(y) \right\|_{L^p(P_X)} := \\
&\left( \int_{\mathbb{R}^d} \left| \log p_{Y_{\theta_1}|X=x}(y) - \log p_{Y_{\theta_2}|X=x}(y) \right|^p p_X(x) \, \mathrm{d}x \right)^{1/p}
\end{align*}
for \( p = 1, 2 \). 
To estimate this, we first define a bound on the derivatives as
\[
\mathrm{maxgrad}\ l(x,y) := \max\left\{ \left| \partial_{a^2} \log p_{Y_{\theta}|X=x}(y) \right|, \left| \partial_{b^2} \log p_{Y_{\theta}|X=x}(y) \right| \right\}.
\]
Since $a^2, b^2$ are bounded, there exists  $C_{a^2}(y_i)=C_{a^2}(y_i, a_{\mathrm{min}}, a_{\mathrm{max}}, b_{\mathrm{min}},b_{\mathrm{max}})>0$ and $C_{b^2}(y_i)=C_{b^2}(y_i, a_{\mathrm{min}}, a_{\mathrm{max}}, b_{\mathrm{min}},b_{\mathrm{max}})>0$ such that  for all $z \in \R$,
$$
0\leq \left|\frac{1}{(a^2+b^2z^2)} - \frac{(y_i-z)^2}{(a^2+b^2z^2)^2} \right| < C_{a^2}(y_i) = C_{a^2}(y_i)
$$
and
$$
0 \leq \left|
z^2\left(\frac{1}{(a^2+b^2z^2)} - \frac{(y_i-z)^2}{(a^2+b^2z^2)^2}\right)
\right|
< C_{b^2}(y_i) = C_{b^2}(y_i),
$$
where we used the fact that each denominator is a polynomials in $z$ of order higher or equal then the polynomial in the nominator.  Consequently, independent of $x$ we can bound
\[
\mathrm{maxgrad}\ l(x,y) \leq \sup_{\theta}
\frac{1}{2}\sum\limits_{i=1}^n \max\{ C_{a^2}(y_i), C_{b^2}(y_i)\} =: C_{\mathrm{grad}}(y).
\]
Hence, by the mean value theorem in parameter space
\[
\left| \log p_{Y_{\theta_1}|X=x}(y) - \log p_{Y_{\theta_2}|X=x}(y) \right| \leq C_{\mathrm{grad}}(y) \cdot \| \theta_1 - \theta_2 \|.
\]
As a result,
\[
\left\| \log p_{Y_{\theta_1}|X=x}(y) - \log p_{Y_{\theta_2}|X=x}(y) \right\|_{L^p(P_X)} \leq L(y) \| \theta_1 - \theta_2 \|,
\]
for \( p = 1, 2 \) constant \( L(y) \).
The derivation further shows that \( L(y) \) depends on $ y $ only as a polynomial of degree $2$ of $ \| y \| $.
Following the proof of \cite[Theorem 14]{sprungk2020local}, we now obtain that
\begin{align}
  W_1(P_{X|Y_{\theta_1}=y}, P_{X|Y_{\theta_2}=y}) \leq \frac{L(y) \mathbb{E}_{x \sim p_X}[\| x \|^2] }{\min(Z_1, Z_2)}
                                                       \| \theta_1 - \theta_2 \|,
\end{align}
where $ Z_1 $ and $ Z_2 $ are the normalization constants \( Z_1, Z_2 \) of the posterior densities.
Note that the assumption \( \text{essinf}_{p_X} \log p_{Y_{\theta_1} | X = x} = 0 \) in the proof can be avoided by considering \( \exp(-[\text{essinf}_{p_X} \min(\log p_{Y_{\theta_1} | X = x}, \\ \log p_{Y_{\theta_2} | X = x})]_{-}) \le 1 \) in constant.

Finally, the normalization constants can be estimated by
\begin{align}
  \min(Z_1, Z_2) =   \min_{k = 1, 2} 
                     \int \exp(- \log p_{Y_{\theta_k} | X = x}) p_X(x) d x
                 \ge \exp(-C \| y \|^2)
\end{align}
for a constant only depending on \(a_{\min}, a_{\max}, b_{\min}, b_{\max} \).
Absorbing \( \exp(-C \| y \|^{2}) \) into \( L(y) \) and using the moment assumption on the prior yields that
\begin{align}
  W_{1}(P_{X | Y_{\theta_{1}} = y}, P_{X | Y_{\theta_{2}} = y}) \le L(y) \| \theta_{1} - \theta_{2} \| 
\end{align}
for \( L(y) \) that is of size \( C (\| y \|^{4} + e^{C \| y \|^{2}}) \) with a suitable constant \( C > 0 \).
This verifies Assumption \assref{A3}.

\paragraph{Assumption (A4)}
We establish a Lipschitz bound on the gradient of the complete-data log-likelihood \( \nabla_{\theta} \log p_{X,Y_\theta}(y,x) \) with respect to the data variable \( x \in \mathbb{R}^{d} \).
\\[1ex]
\textbf{Difference in \( \partial_{a^2} \) Terms.}
Let
\[
g_i(x) := \frac{1}{\sigma_i(x)} - \frac{r_i(x)^2}{\sigma_i(x)^2}, \quad \text{with } r_i(x) := y_i - f_i(x).
\]
Then,
\begin{align*}
&\left| \partial_{a^2} \log p_{X,Y_\theta}(x_1,y) - \partial_{a^2} \log p_{X,Y_\theta}(x_2,y) \right| \\
&\leq \left| \partial_{a^2} \log p_{Y_\theta|X=x_1}(y) - \partial_{a^2} \log p_{Y_\theta|X=x_2}(y) \right| + \left| \log p_X(x_1) - \log p_X(x_2) \right| \\
&\leq \frac{1}{2} \sum_{i=1}^n |g_i(x_1) - g_i(x_2)| + \left| \log p_X(x_1) - \log p_X(x_2) \right|.
\end{align*}
To bound \( |g_i(x_1) - g_i(x_2)| \), apply the mean value theorem. The gradient reads:
\begin{align*}
\nabla_x g_i(x) &= -\frac{2b^2 f_i(x)}{\sigma_i(x)^2} \nabla_x f_i(x) 
+ \frac{4b^2 f_i(x) r_i(x)^2}{\sigma_i(x)^3} \nabla_x f_i(x) 
+ \frac{2r_i(x)}{\sigma_i(x)^2} \nabla_x f_i(x) \\
&= \left[ -\frac{2b^2 f_i(x)}{\sigma_i(x)^2} + \frac{4b^2 f_i(x) r_i(x)^2}{\sigma_i(x)^3} + \frac{2r_i(x)}{\sigma_i(x)^2} \right] \nabla_x f_i(x).
\end{align*}
Let $f_i(x) = z$ and define
$$
h(z) = 
-\frac{2b^2 z}{(a^2 +b^2 z^2)^2} + \frac{4b^2 z (y_i-z)^2}{(a^2+b^2 z^2)^3} + \frac{2(y_i-z)}{(a^2+b^2z^2)^2}
$$
It holds $a^2+b^2z^2 \geq a_{\mathrm{min}}>0$ for all $z\in\mathbb{R}$ and $|h(z)|\to 0$ for $|z|\to\infty$. Consequently $h(z)$ attains a finite maximum, i.e. there exists $C_h = C_h(a^2,b^2,y_i)>0$ with $|h(z)|\leq C_h< \infty$ for all $z\in\mathbb{R}$. Together with the uniform bound assumption on $\nabla_x f_i(x)$, we obtain 
$$
||\nabla_x g_i(x)|| \leq C_h(a^2,b^2,y_i)C_{f'} =:C_{i,1}(y_i,\theta).
$$
\textbf{Difference in \( \partial_{b^2} \) Terms.}
Let
\[
h_i(x) := f_i(x)^2 \cdot g_i(x).
\]
Then we get
\[
\left| \partial_{b^2} \log p_{X,Y_\theta}(x_1,y) - \partial_{b^2} \log p_{X,Y_\theta}(x_2,y) \right| 
\leq \frac{1}{2} \sum_{i=1}^n |h_i(x_1) - h_i(x_2)| + \left| \log p_X(x_1) - \log p_X(x_2) \right|.
\]
Let again $f_i(x) = z$, then $g_i(x) = \frac{1}{a^2+b^2z^2} - \frac{(y_i-z)^2}{(a^2+b^2z^2)^2}$. We bound  $|h_i(x_1)-h_i(x_2)|$ with mean value theorem and therefore define 
$$
H(z) = z^2 \left( \frac{1}{a^2 + b^2 z^2} - \frac{(y_i - z)^2}{(a^2 + b^2 z^2)^2} \right).
$$
Then 
\[
\begin{aligned}
H'(z) 
&= 2z \left( \frac{1}{a^2 + b^2 z^2} - \frac{(y_i - z)^2}{(a^2 + b^2 z^2)^2} \right) 
 + z^2 \left( \frac{-2b^2 z}{(a^2 + b^2 z^2)^2}
+ \frac{2(y_i - z)}{(a^2 + b^2 z^2)^2}
+ \frac{4b^2 z (y_i - z)^2}{(a^2 + b^2 z^2)^3} \right)
\end{aligned}
\]
and we conclude $|H'(z)| \in \mathcal{O}(|z|^{-1})$ for $|z|\to\infty$. Hence we have $|H'(z)|\le C_{i,2}(y_i,\theta)$ for a constant $C_{i,2}>0$ independent of $z=f_i(x)$ and in particular of $x$. 
This yields 
$$
|h_i(x_1) - h_i(x_2)| \leq C_{i,2}(y_i,\theta) \|x_1-x_2\|.
$$
\\[1ex]
\textbf{Combining both terms.}
Now, the gradient difference satisfies
\begin{align*}
&\left\| \nabla_{\theta} \log p_{X,Y_{\theta}}(x_1,y) - \nabla_{\theta} \log p_{X,Y_{\theta}}(x_2,y) \right\|^2 
= |\partial_{a^2} \cdot |^2 + |\partial_{b^2} \cdot |^2 \\
&\leq \left( \frac{1}{2} \sum_{i=1}^n C_{i,1}(y_i, \theta) \|x_1 - x_2\| + |\log p_X(x_1) - \log p_X(x_2)| \right)^2 \\
&\quad + \left( \frac{1}{2} \sum_{i=1}^n C_{i,2}(y_i, \theta) \|x_1 - x_2\| + |\log p_X(x_1) - \log p_X(x_2)| \right)^2.
\end{align*}
Assume that \( \log p_X \) is Lipschitz with constant \( L_0 \). Then it holds
\[
|\log p_X(x_1) - \log p_X(x_2)| \leq L_0 \|x_1 - x_2\|.
\]
We define
\[
C_{total}(y,\theta) := \left( \sum_{i=1}^n \frac{1}{2} C_{i,1}(y_i, \theta) + L_0 \right)^2 + \left( \sum_{i=1}^n \frac{1}{2} C_{i,2}(y_i, \theta) + L_0 \right)^2.
\]
Then we obtain the Lipschitz bound\[
\left\| \nabla_{\theta} \log p_{X,Y_{\theta}}(x_1,y) - \nabla_{\theta} \log p_{X,Y_{\theta}}(x_2,y) \right\| 
\leq \sqrt{C_{total}(y, \theta)} \|x_1 - x_2\|.
\]
This verifies Assumption \hyperref[ass:A4]{(A4)}.
    
\paragraph{Assumption (A5-SG)}

To verify assumption \hyperref[ass:A5SG]{(A5-SG)}, we first need to show that $Y_{\theta}$ is indeed sub-Gaussian. 
First note that,  
$$\mathbb{E}[\Vert Y_{\theta} \Vert^p]^{1/p} \leq \mathbb{E}[\Vert F(X) \Vert^p]^{1/p} + \mathbb{E}[ \Vert \eta_{\theta} \Vert^p]^{1/p}. $$
Now, due to the boundedness of $F$ and the bound constraints on the parameters $a,b$, there exists a $C\neq C(p)>0$ such that 
$$\mathbb{E}[\Vert F(X) \Vert^p]^{1/p} + \mathbb{E}[ \Vert \eta_{\theta} \Vert^p]^{1/p}\leq C \sqrt{p}.$$
Then, the characterization of Sub-gaussian properties, in particular the moment property in \cite[Proposition 2.6.1]{Vershynin_2018}, yield the sub-Gaussianity of $Y_{\theta}$. 
Finally, the bounds in \hyperref[ass:A3]{(A3)} and \hyperref[ass:A4]{(A4)} guarantee the existence of the required continuous functions $h_1$ and $h_2$.

\begin{remark}
With similar arguments and assumptions in \ref{ass:A3} to \cite{stuart_2010, sprungk2020local} one can also show local Lipschitz continuity for the total variation distance. Then \hyperref[ass:A4]{(A4)} is needs to be exchanged by boundedness of the score $\nabla_{\theta} \log p_{X,Y_{\theta}}$. 
\end{remark}
\begin{remark}\label{rem:gamma_c}
Unfortunately, the bounds in the Wasserstein case give an exponential factor, which destroys the integrability required in \hyperref[ass:A5]{(A5)}. Similar to \cite[Corollary 19]{sprungk2020local}, we only get convergence on sets where $Y$ has bounded radius. This high probability statement does not yield the EM convergence. However, the probabilistic FOS argument gives plausibility to convergence in practice, as will be shown in the next section. 
\end{remark}

\section{Numerical Examples} \label{sec:numerics}

In this section, we aim to show that the resulting algorithm works remarkably well in high dimensions, i.e., on imaging examples. For this, we modify the algorithm from \cite{Hagemann_2024} by replacing their conditional normalizing flows with flow matching. The flow matching framework \cite{lipman2023flow} has been shown to be very efficient. 

First, we modify the algorithm of \cite{Hagemann_2024} in two ways, namely that we replace the learning in the E-step by flow matching and further, the learning of the noise parameters $(a,b)$ that was done by another EM in \cite{Hagemann_2024} by a gradient descent based scheme. The algorithm is summarized in Alg. \ref{alg_cnf_mixed}.

\begin{algorithm}[!ht]
\caption{EM Algorithm (Mixed noise estimation via flow matching, one observation)}\label{alg_cnf_mixed}
\begin{algorithmic}[1]  
\State \textbf{Input:} $\tilde y \in \mathbb{R}^n$ and initial estimate $\theta_0$ by some initial guess $(a, b)$
\For{$r = 0, 1, \ldots, R$} 
  \State \textbf{E-Step:} Simulate $x \sim P_X$, and $y \sim P_{Y_{\theta_r} \mid X = x}$
  \State Train the velocity field on the loss
  \[
  L(w) = \mathbb{E}_{z,x,y,t}[\Vert v_t(y,x_t) - (x-z) \Vert^2]
  \]
  \State \textbf{M-Step:} Simulate from the current velocity ODE: $x \sim P_{X \mid Y = \tilde{y}}$
  \State Calculate the loss
  \[
  \tilde{L}(\theta) = \mathbb{E}_{x \sim P_{X \mid Y_{\theta_r}}}\left[ -\log p_{Y_{\theta} \mid X = x}(\tilde{y}) \right]
  \]
  \State Update $\theta$ using gradient descent:
  \[
  \theta^{(r+1)} = \theta^{(r)} - \eta \nabla_{\theta} \tilde{L}(\theta)
  \]
\EndFor
\end{algorithmic}
\end{algorithm}

\subsection{MNIST Denoising}
\label{sec:MNIST}
We apply our flow matching noise estimation algorithm to MNIST \cite{mnist} denoising. For this we choose $a_{true} = 0.1$ and $b_{true} = 0.3$ and rescale the MNIST images to $[-1,1]$. We generate 16 noisy measurements, as can be seen on the left of Fig. \ref{fig:samples_fm}. Then we iterate between the flow matching algorithm for the current estimates of $(a,b)$ and then estimate $(a,b)$ based on the measurements. For the final flow matching model, one can see the posterior (aka clean) samples of the flow matching model on the right of Fig. \ref{fig:samples_fm}. We show the convergence of a and b over the epochs in Fig. \ref{fig:mnist_convab}. Here, $a$ decreases towards the true value and b increases towards the true value. The initial estimation was both for a and b at $\frac{1}{2}$.
\begin{figure}
    \centering
    \includegraphics[width=0.3\linewidth]{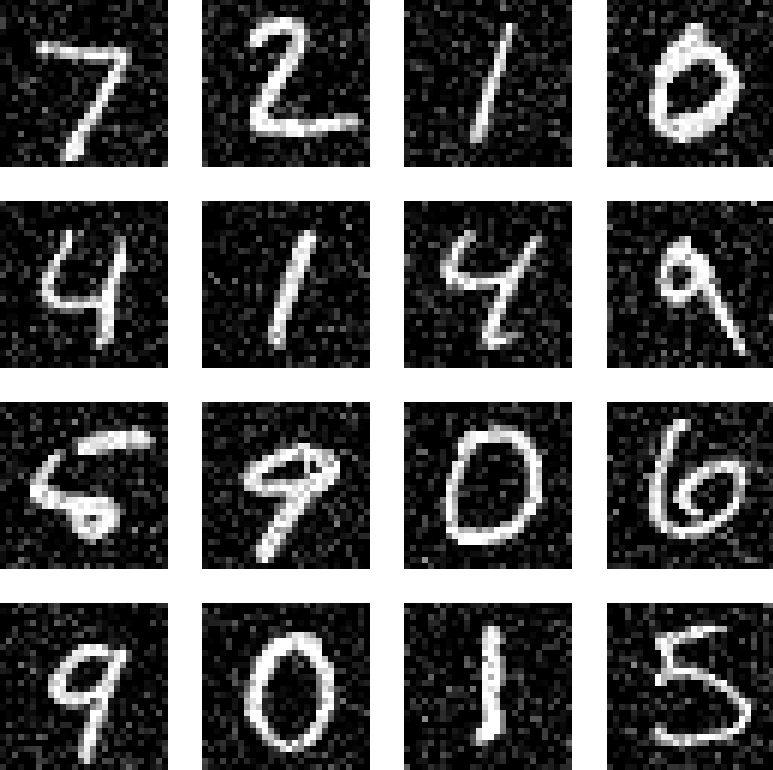}
        \hspace{0.05\linewidth}\includegraphics[width=0.3\linewidth]{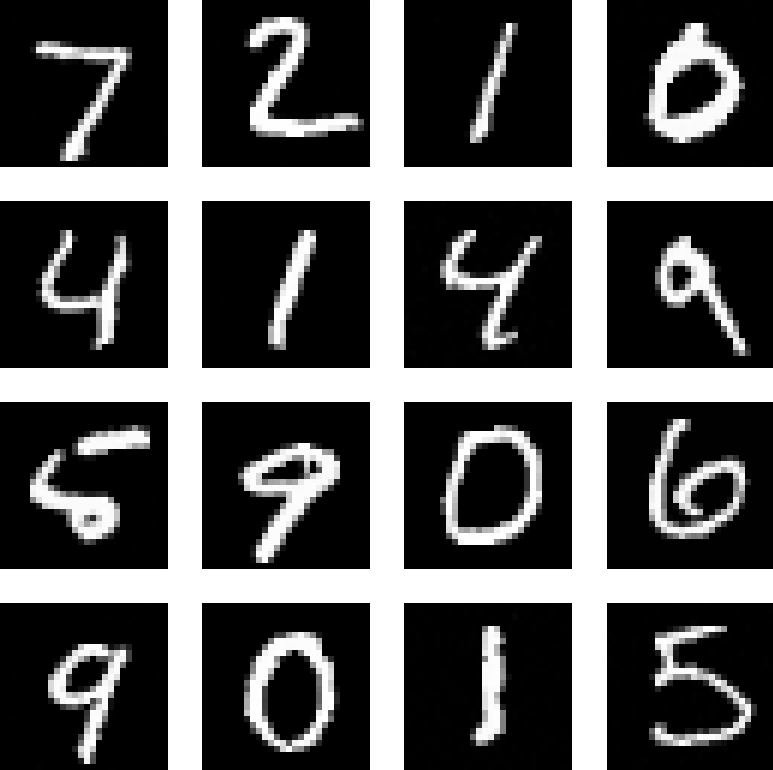}
    \caption{Conditions and resulting flow matching samples.}
    \label{fig:samples_fm}
\end{figure}

\begin{figure}
    \centering
    \includegraphics[width=0.44\linewidth]{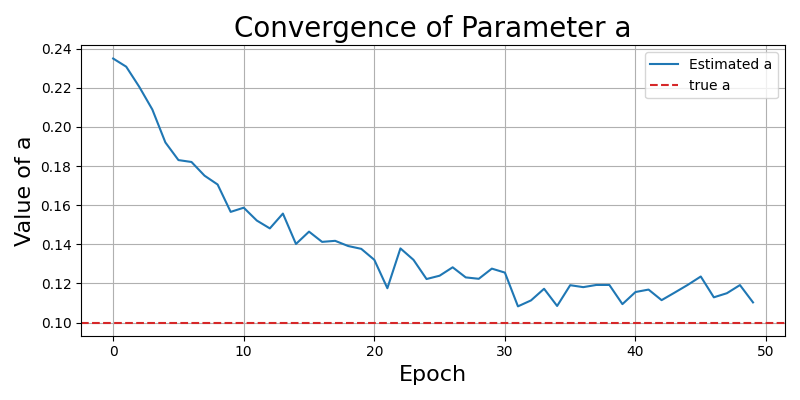}
    \includegraphics[width=0.44\linewidth]{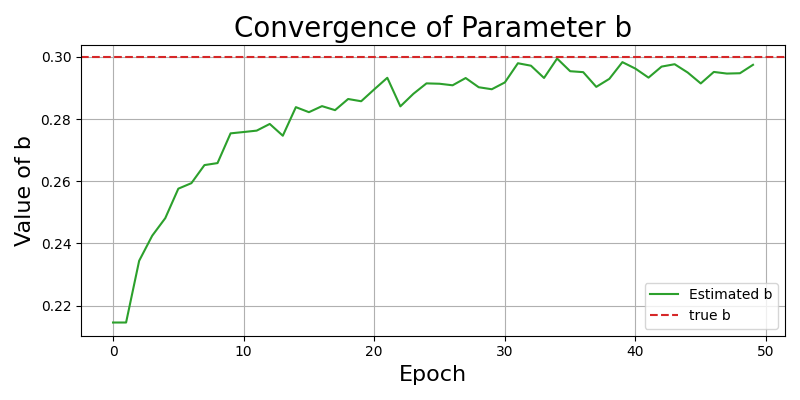}
    \caption{Convergence of a and b for the MNIST denoising imaging example.}
    \label{fig:mnist_convab}
\end{figure}

\subsection{Reaction-Diffusion PDE}
\label{sec:PDE}

In this example, we consider the forward model of a reaction diffusion equation. The forward operator is the solution of the PDE between time $50$ and $100$.  The initial distribution of the data at $t = 0$ is Gaussian. The reaction diffusion equation has two components $(u,v)$ and reads as 
\begin{align*}\partial_t u &= \alpha_u \partial_{x,x} u + \alpha_u \partial_{y,y} u + R_u(u,v),\\
\partial_t v &= \alpha_v \partial_{x,x} v + \alpha_v \partial_{y,y} v + R_v(u,v),
\end{align*}
where $\alpha_{u}$ and $\alpha_v$ are the diffusion coefficients and $R_u(u,v) = u - u^3-k-v$, $R_v(u,v) = u-v$ and $\beta_v$ are reaction functions \cite{takamoto2022pdebench}.

We approximate the forward operator based on the PDE bench dataset as provided in \cite{takamoto2022pdebench} using a Fourier Neural Operator \cite{li2021fourier,kossaifi2024neural}. Further, our velocity field in the flow matching framework is also parameterized by an FNO, see also \cite{hagemann2023multilevel}. By this, we mean that we concatenate the time input with the images in a channel and use a standard FNO. 

The choice of the neural operator does not correspond to an infinite dimensional framing of this inverse problem, but rather that the FNO has shown to be effective in modelling PDE solutions \cite{li2021fourier, takamoto2022pdebench}.

We train our flow matching algorithm in conjunction with the $(a,b)$ estimation for 30000 optimizer steps, where we update $(a,b)$ every 200 steps. We only consider a single image from a validation dataset as the condition, which is the second from left in Fig. \ref{fig:pde_plots}. One can see that the posterior mean and ground truth (first and third image) match quite well. The posterior standard deviation seems a bit noisy, but is larger where the forward operator is large. 

Similarly, the convergence of a and b can be observed in Fig. \ref{fig:pde_convab}. Here, the convergence seems more stochastic than in the MNIST example, which we attribute to the larger images and therefore worse posterior approximation. 

\begin{figure}[htbp]
\centering

\begin{subfigure}[b]{0.23\linewidth}
    \includegraphics[width=\linewidth]{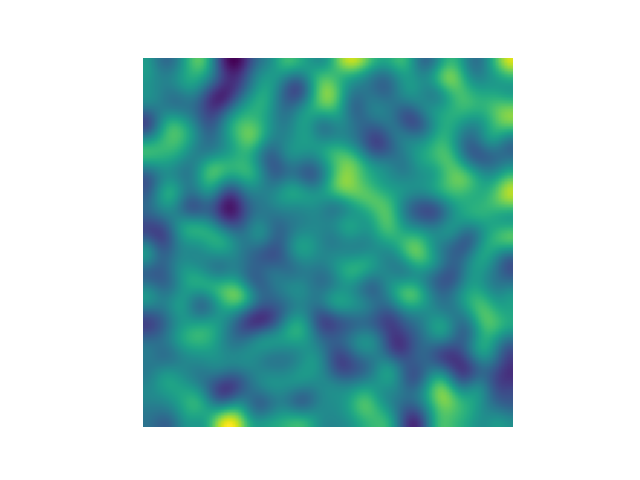}
    \caption{Ground truth $u$}
\end{subfigure}
\begin{subfigure}[b]{0.23\linewidth}
    \includegraphics[width=\linewidth]{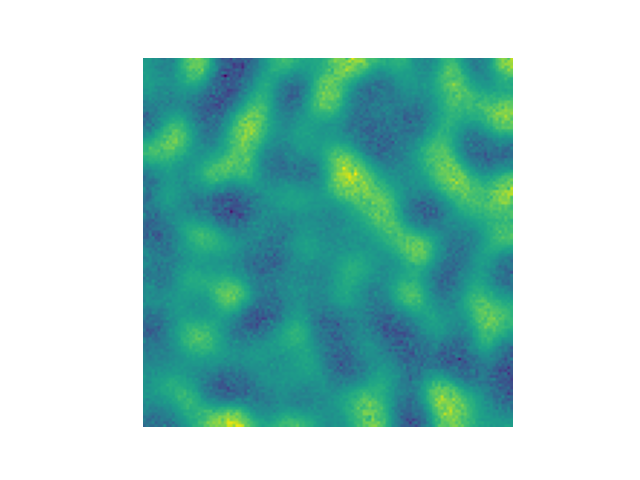}
    \caption{Measurement $u$}
\end{subfigure}
\begin{subfigure}[b]{0.23\linewidth}
    \includegraphics[width=\linewidth]{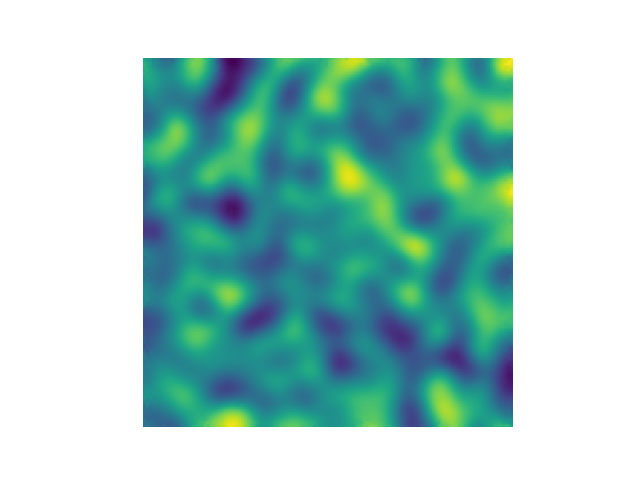}
    \caption{Posterior mean $u$}
\end{subfigure}
\begin{subfigure}[b]{0.23\linewidth}
    \includegraphics[width=\linewidth]{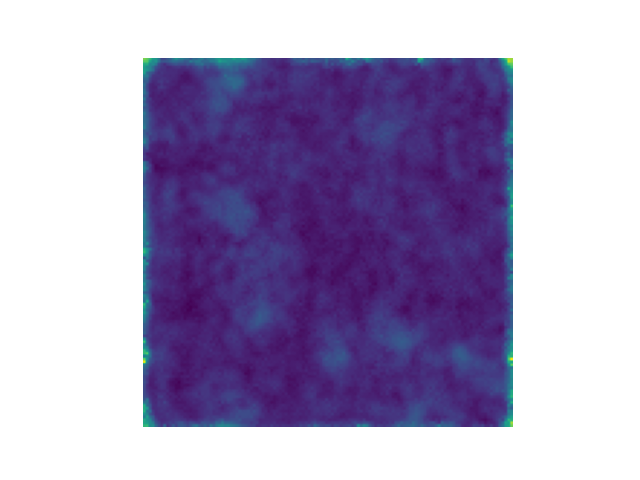}
    \caption{Posterior std. $u$}
\end{subfigure}

\vspace{0.5em}

\begin{subfigure}[b]{0.23\linewidth}
    \includegraphics[width=\linewidth]{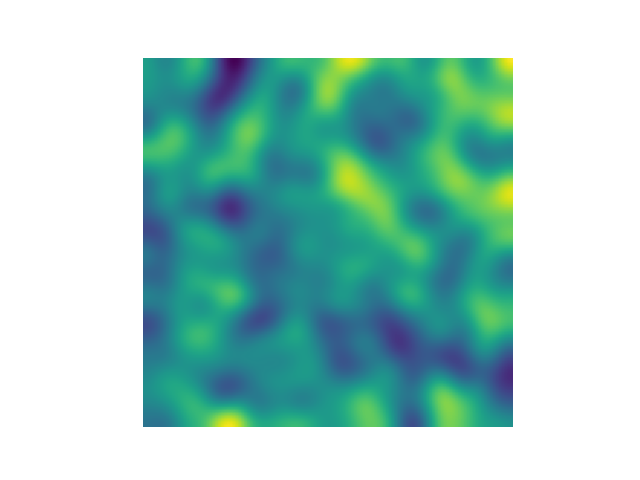}
    \caption{Ground truth $v$}
\end{subfigure}
\begin{subfigure}[b]{0.23\linewidth}
    \includegraphics[width=\linewidth]{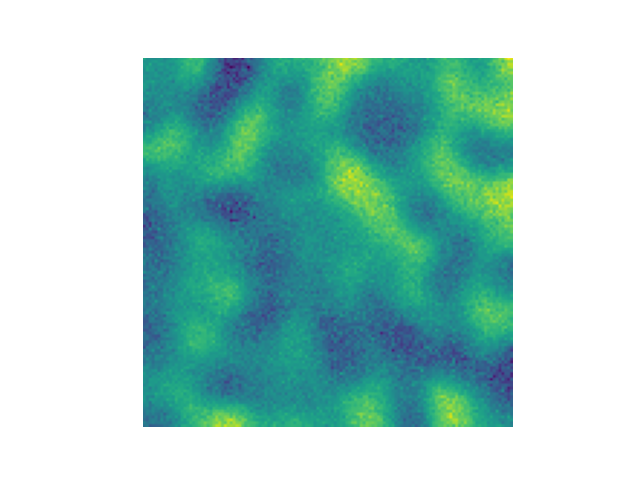}
    \caption{Measurement $v$}
\end{subfigure}
\begin{subfigure}[b]{0.23\linewidth}
    \includegraphics[width=\linewidth]{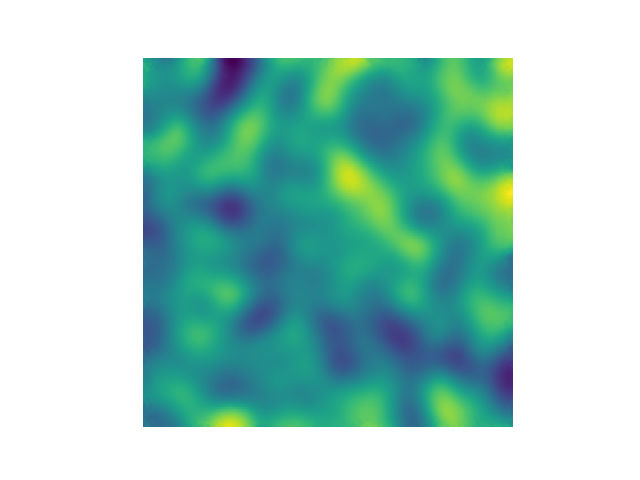}
    \caption{Posterior mean $v$}
\end{subfigure}
\begin{subfigure}[b]{0.23\linewidth}
    \includegraphics[width=\linewidth]{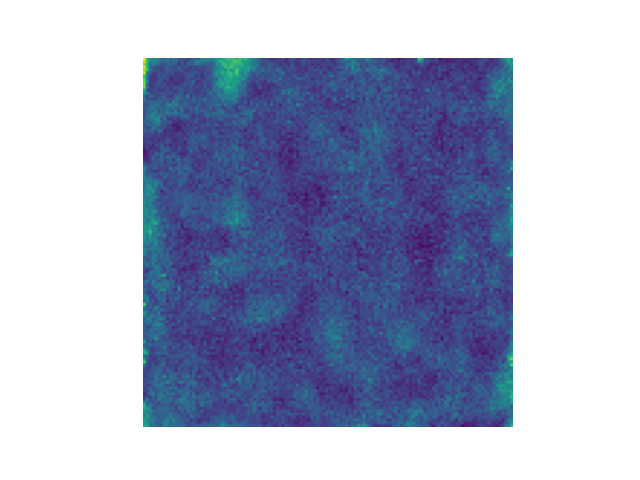}
    \caption{Posterior std. $v$}
\end{subfigure}

\caption{Ground truth, measurement, posterior mean and standard deviations of the model provided for the $u$- and $v$-component.}
\label{fig:pde_plots}
\end{figure}
\begin{figure}
    \centering
    \includegraphics[width=0.44\linewidth]{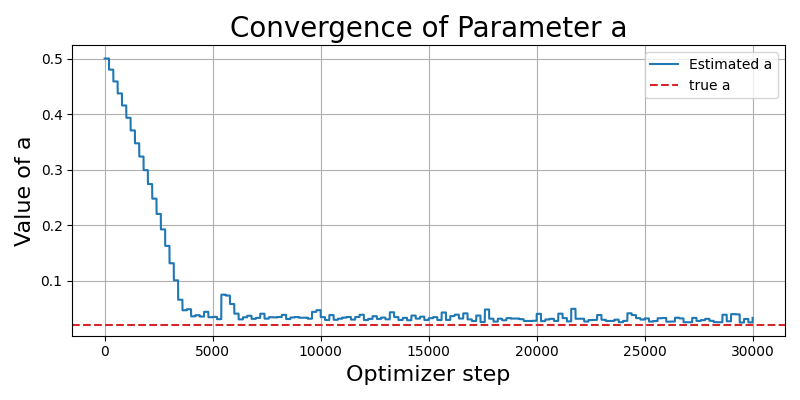}
    \includegraphics[width=0.44\linewidth]{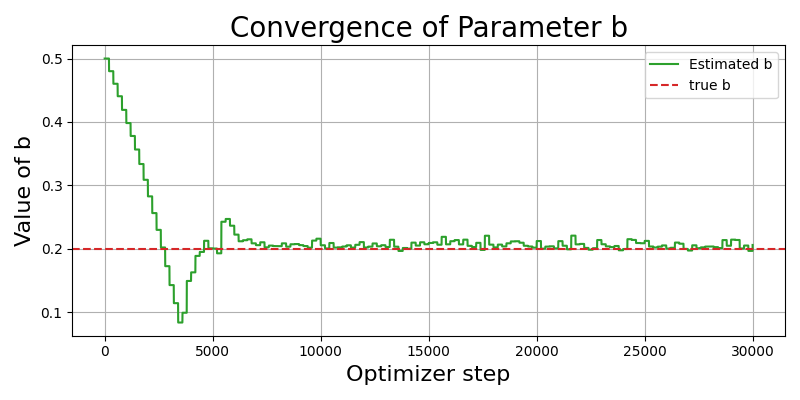}
    \caption{Convergence of a and b for the PDE example.}
    \label{fig:pde_convab}
\end{figure}
\section{Conclusions}
In this work, we conducted a detailed analysis of a mixed-noise error model frequently encountered in physical and chemical applications, and established a convergence result under idealized but interpretable conditions. Compared to the general conditions in \cite{em_conv}, we derived alternative criteria that are easier to verify and tailored to the structure of our noise model. In other words, we rephrased the conditions in a more Bayesian language.

Still, it is unclear whether our EM convergence theory applies to noise models with compact support, and whether it can be shown that our probabilistic FOS translates into EM convergence under the conditions on $\gamma$ and $c$.

Moreover, we extended the EM algorithm to high-dimensional inverse problems by incorporating the flow matching framework, enabling efficient posterior approximation and parameter estimation in imaging settings. We hope that our analysis contributes to a better theoretical understanding of mixed-noise models and supports the development of convergence guarantees in simulation-based inference methods \cite{cranmer_sbi, pmlr-v235-gloeckler24a}.

\section*{Acknowledgments}
PH and GS are grateful for 
 funding within the DFG-SPP 2298 ”Theoretical Foundations of Deep Learning” (Project: STE 571/17-1)
and
RG, CS, GS 
for the support from MATH+ project AA5-5, funded by DFG under Germany's Excellence Strategy – The Berlin Mathematics
Research Center MATH+ (EXC-2046/1, project ID: 390685689 ).

\bibliographystyle{siam}
\bibliography{biblio}

\end{document}